\DeclareMathOperator*{\argmin}{arg\,min}
\DeclareMathOperator*{\argmax}{arg\,max}
\newtheorem{lemma}{Lemma}
\DeclarePairedDelimiter\abs{\lvert}{\rvert}%
\theoremstyle{definition}
\newtheorem{definition}{Definition}[section]
\theoremstyle{theorem}
\newtheorem{theorem}{Theorem}
\newtheorem{proposition}{Proposition}
\newcommand*{\centerfloat}{%
  \parindent \z@
  \leftskip \z@ \@plus 1fil \@minus \textwidth
  \rightskip\leftskip
  \parfillskip \z@skip}
\title{Learning Unbiased Representations via Rényi Minimization}
\author{%
 Vincent Grari \\
 Sorbonne Université LIP6/CNRS\\
 Paris, France\\
 \texttt{vincent.grari@lip6.fr} \\
  \And
   Oualid El Hajouji \\
   Ecole polytechnique \\
   Palaiseau, France \\
   \texttt{oualid.el-hajouji@polytechnique.edu} \\
   \AND
   Sylvain Lamprier \\
   Sorbonne Université LIP6/CNRS \\
   Paris, France \\
   \texttt{sylvain.lamprier@lip6.fr} \\
   \And
   Marcin Detyniecki \\
   AXA REV Research \\
   Paris, France \\
   \texttt{marcin.detyniecki@axa.com} 
}
\begin{document}

\maketitle

\begin{abstract}
In recent years, significant work has been done to include fairness constraints in the training objective of machine learning algorithms.
Many state-of the-art algorithms tackle this challenge by learning a fair representation which captures all the relevant information to predict the output $Y$ while not containing any information about a sensitive attribute $S$. In this paper, we propose an adversarial algorithm to learn unbiased representations via the Hirschfeld-Gebelein-Renyi (HGR) maximal correlation coefficient. We leverage recent work which has been done to estimate this coefficient by learning deep neural network transformations and use it as a min-max game to penalize the intrinsic bias in a multi dimensional latent representation. Compared to other dependence measures, the HGR coefficient captures more information about the non-linear dependencies with the sensitive variable, making the algorithm more efficient in mitigating bias in the representation. We empirically evaluate and compare our approach and demonstrate significant improvements over existing works in the field.


\end{abstract}

\section{Introduction}

\footnotetext[1]{A link to our python code is available here: \url{https://github.com/fairness-adversarial/unbiased_representations_renyi}.} 
This recent decade, deep learning models have shown very competitive results 
by learning 
representations that 
capture relevant information for the learning task. 
However, the representation learnt by the deep model may contain some bias from the training data. This bias can be intrinsic to the training data, and may therefore induce a generalisation problem due to a distribution shift between training and testing data. For instance, the color bias in the colored MNIST data set \cite{kim2019learning} can make models focus on the color of a digit rather than its shape for the classification task. The bias can also go beyond training data, so that inadequate representations can perpetuate or even reinforce some society biases~\cite{Bolukbasi2016} (e.g. gender or age). Since the machine learning models have far-reaching consequences in our daily lives (credit rating, insurance pricing, recidivism score, etc.), we need to make sure that the representation data contains as little bias as possible.
A naive method to mitigate bias could be to simply remove sensitive attributes from the training data set~\cite{Pedreshi2008}. However, this concept, known as "fairness through unawareness", is highly insufficient because any other non-sensitive attribute might indirectly contain significant sensitive information reflected in the deep learning representation. For example, the height of an adult could provide a strong indication about the gender.
A new research field has emerged to find solutions to this problem: fair machine learning. Its overall objective is to ensure that the prediction model is not dependent on a sensitive attribute~\cite{Zafar2017mechanisms}. Many recent papers tackle this challenge using an adversarial neural architecture, which can successfully mitigate the bias.
We distinguish two adversarial mitigation families with, first, prediction retreatment methods where an adversarial neural network
encourages bias mitigation  
on the output prediction; 
and, second, fair representation methods where an adversarial 
mitigates the bias on an intermediary latent representation. Recent papers have shown that the fair adversarial representation can tend to give better results in terms of prediction accuracy while remaining fair in complex real-world scenarios \cite{adel2019one}. In this paper, we 
propose a new fair representation architecture by leveraging the recent 
Renyi neural estimator, previously used in a prediction retreatment algorithm \cite{grari2019fairness} which 
obtained very competitive results on various real-world data sets.

The contributions of this paper are:
\begin{itemize}
    \item We provide a theoretical analysis of the HGR estimation by neural network which was not provided in \cite{grari2019fairness};
    \item We propose a neural network architecture which creates a fair representation by minimizing the HGR coefficient. 
    The HGR network is trained  
    to discover non-linear transformations 
    between the multidimensional latent representation and the sensitive feature;
    \item We demonstrate empirically that our neural HGR-based approach is able to identify the optimal transformations with multidimensional features and present very competitive results for fairness learning with continuous sensitive features.  
\end{itemize}



\section{Related Work}
\label{sec:related_work}
Significant work has been done in the field of fair machine learning recently, in particular when it comes to quantifying and mitigating undesired bias. For the mitigation approaches, three distinct strategy groups exist.
While   
pre-processing \cite{kamiran2012data,bellamy2018ai,calmon2017optimized} and post-processing  \cite{hardt2016equality,chen2019fairness} approaches respectively  act on the input or the output of a classically trained predictor,  
in-processing approaches mitigate the undesired bias directly during the training phase \cite{Zafar2017mechanisms,celis2019classification, zhang2018mitigating,wadsworth2018achieving,louppe2017learning}. In this paper we focus on in-processing fairness, which proves to be the most powerful framework for settings where acting on the training process is an option.

Among the in-processing approaches, some of them, referred to as prediction retreatment, aim at directly modifying the prediction output by adversarial training. To ensure independence between the output and the sensitive attribute, Zhang et al. \cite{zhang2018mitigating} feed the prediction output as input to an adversary network (upper right in Figure 1 
in appendix), whose goal is to predict the sensitive attribute, and update the predictor weights to fool the adversary. Grari et al. \cite{grari2019fairness} minimize the HGR correlation between the prediction output and the sensitive attribute in an adversarial learning setting (middle right in 
Figure 1
in appendix).

On the other hand, several research sub-fields in the in-processing family tackle the problem of learning unbiased representations. Domain adaptation \cite{daume2006domain, blitzer2006domain} and domain generalization \cite{muandet2013domain, li2017deeper} consist in learning representations that are unbiased with respect to a source distribution, and can therefore generalize to other domains. Some of the works in these fields involve the use of adversarial methods \cite{ganin2014unsupervised, ganin2016domain, tzeng2017adversarial}, close to our work. 
Several strategies mitigate bias towards a sensitive attribute through representation. One approach \cite{zemel2013learning} relies on a discriminative clustering model to learn a multinomial representation that removes information regarding a binary sensitive attribute. A different approach \cite{alvi2018turning} consists in learning an unbiased representation by minimizing a confusion loss. Invariant representations can also be learnt using Variational Auto-Encoders \cite{kingma2013auto}, by adding a mutual information penalty term \cite{moyer2018invariant}. Adel et al. \cite{adel2019one} learn a fair representation by inputting it to an adversary network, which is prevented from predicting the sensitive attribute (upper left in 
Figure 1
in appendix). Other papers minimize the mutual information between the representation and the sensitive attribute: Kim et al. \cite{kim2019learning} rely on adversarial training with a discriminator detecting the bias, while Ragonesi et al. \cite{ragonesi2020learning} rely on an estimation by neural network of mutual information \cite{belghazi2018mutual} (lower left in Figure 1 
in appendix). 

\section{Problem Statement}
\label{sec:Problem_statement}
Throughout this document, we consider a supervised machine learning algorithm for regression or classification problems. The training data consists of $n$ examples ${(x_{i},s_{i},y_{i})}_{i=1}^{n}$, where $x_{i} \in \mathbb{R}^{p}$ is the feature vector with $p$ predictors of the $i$-th example, $s_i$ is its continuous sensitive attribute and $y_{i}$ its continuous or discrete outcome. We address a common objective in fair machine learning, \emph{Demographic Parity}, which ensures that the sensitive attribute $S$ is independent of the prediction $\widehat{Y}$.

\subsection{Metrics for Continuous Statistical Dependence}

In order to assess this fairness definition in the continuous case, it is essential to look at the concepts and measures of statistical dependence. 
Simple ways of measuring dependence are Pearson's rho, Kendall's tau or Spearman's rank. Those types of measure have already been used in fairness, with the example of mitigating the conditional covariance for categorical variables~\cite{Zafar2017mechanisms}. However, the major problem with these measures is that they only capture a limited class of association patterns, like linear or monotonically increasing functions. For example, a random variable with standard normal distribution and its cosine (non-linear) transformation are not correlated in the sense of Pearson. 

Over the last few years, many non-linear dependence measures have been introduced like the Kernel Canonical Correlation Analysis (KCCA)~\cite{hardoon2009convergence}, the Distance or Brownian Correlation (dCor)~\cite{szekely2009brownian},  the Hilbert-Schmidt Independence Criterion (HSIC and CHSIC)~\cite{Gretton:2005:KMM:1046920.1194914,poczos2012copula} or the Hirschfeld-Gebelein-R\'enyi (HGR)~\cite{renyi1959measures}. 
Comparing those non-linear dependence measures~\cite{lopez2013randomized}, the HGR coefficient seems to be an interesting choice: it is a normalized measure which is capable of correctly measuring linear and non-linear relationships, it can handle multi-dimensional random variables and it is invariant with respect to changes in marginal distributions.
\footnotetext[1]{$\rho(U, V)$ := $\frac{Cov(U;V)}{\sigma_{U}\sigma_{V}}$, where $Cov(U;V)$, $\sigma_{U}$ and $\sigma_{V}$ are the covariance between $U$ and $V$, the standard deviation of $U$ and the standard deviation of $V$, respectively.}


 
\begin{definition}
For two jointly distributed random variables $U \in \mathcal{U}$ and $V \in \mathcal{V}$
, the Hirschfeld-Gebelein-R\'enyi maximal correlation is
defined as:
\begin{align}
HGR(U, V) &= \sup_{\substack{ f:\mathcal{U}\rightarrow \mathbb{R},g:\mathcal{V}\rightarrow \mathbb{R}}} \rho(f(U), g(V)) = \sup_{\substack{ f:\mathcal{U}\rightarrow \mathbb{R},g:\mathcal{V}\rightarrow \mathbb{R}\\
           E(f(U))=E(g(V))=0 \\   E(f^2(U))=E(g^2(V))=1}} E(f(U)g(V))
\label{hgr}
\end{align}
where $\rho$ is the Pearson linear correlation coefficient~\footnotemark[1] with some measurable functions $f$ and $g$ with positive and finite variance. 
\end{definition}
The HGR coefficient is equal to 0 if the two random variables are independent. If they are strictly dependent the value is 1. 
The spaces for the functions $f$ and $g$ are  infinite-dimensional. This property is the reason why the HGR coefficient proved difficult to compute. 

Several approaches rely on Witsenhausen's linear algebra characterization ~\cite{witsenhausen1975sequences} to compute the HGR coefficient. For discrete features, this characterization can be combined with Monte-Carlo estimation of probabilities ~\cite{baharlouei2019r}, or with kernel density estimation (KDE) \cite{mary2019fairness_full} to compute the HGR coefficient. 
We will refer to this second metric, in our experiments, as HGR\_KDE. Note that this metric can be extended to the continuous case by discretizing the density computation.
Another way to approximate this coefficient, Randomized Dependence Coefficient (RDC) \cite{lopez2013randomized}, is to require that $f$ and $g$ belong to reproducing kernel Hilbert spaces (RKHS) and take the largest canonical correlation between two sets of copula random projections. We will make use of this approximated metric 
as HGR\_RDC. Recently a new approach \cite{grari2019fairness} proposes to estimate the HGR by deep neural network. The main idea is to use two inter-connected neural networks to approximate the optimal transformation functions $f$ and $g$ from \ref{hgr}. The $ HGR_{\Theta}(U,V)$ estimator is computed by considering the expectation of the products of standardized outputs of both networks ($\hat{f}_{w_{f}}$ and $\hat{g}_{w_{g}}$). The respective parameters $w_{f}$ and $w_{g}$ are updated by gradient ascent on the objective function to maximize: $J(w_f,w_g)=E[\hat{f}_{w_{f}}(U)\hat{g}_{w_{g}}(V)]$. This estimation has the advantage of being estimated by backpropagation, the same authors therefore present a bias mitigation via a min-max game with an adversarial neural network architecture. However, this attenuation is performed on the predictor output only. Several recent papers \cite{adel2019one, ragonesi2020learning} have shown that performing the attenuation on a representation 
tends to give better results in terms of prediction accuracy while remaining fair in complex real-world scenarios. In this work, we are interested in learning fair representations via this Renyi estimator. 


\section{Theoretical Properties}
\label{sec:Theo_properties}
In this section we study the consistency of the HGR\_NN estimator (referred to as $\widehat{HGR(U,V)}_{n}$), 
and provide a theoretical comparison with 
simple adversarial algorithms that rely on an adversary which predicts the sensitive attribute \cite{zhang2018mitigating,adel2019one}. All the proofs can be found in the Supplementary Material.

\subsection{Consistency of the HGR\_NN}

\theoremstyle{definition}
\begin{definition}{(Strong consistency)} The estimator $\widehat{HGR(U,V)}_{n}$ is strongly consistent if for all $\epsilon > 0$, there exists a positive integer $N$ and a choice of statistics network such that:
\begin{equation}
\forall n \geq N, \abs{HGR(U,V)-\widehat{HGR(U,V)}_{n}} \leq \epsilon, a.s.
\end{equation} 
\end{definition}

As explained in MINE \cite{belghazi2018mutual}, the question of consistency is divided into two problems: a deterministic approximation problem related to the choice of the statistics network, and an estimation problem related to the use of empirical measures. \

The first lemma addresses the approximation problem using universal approximation theorems for neural networks \cite{hornik1989multilayer}:

\begin{lemma}{(approximation)}
Let $\eta > 0$. There exists a family of continuous neural networks $F_{\Theta}$ parametrized by a compact domain $\Theta \subset \mathbb{R}^{k}$, such that
\begin{equation}
 \abs{HGR(U,V)-HGR_{\Theta}(U,V)} \leq \eta.
\end{equation}
\end{lemma}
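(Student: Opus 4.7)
The strategy is to exploit the variational definition of $HGR$: first find near-optimal measurable test functions $f_0, g_0$, then approximate them by neural networks via the Hornik universal approximation theorem, and finally standardize the resulting networks while controlling how the normalization step propagates the approximation error.

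First, by the definition of the supremum in \eqref{hgr}, for any $\eta > 0$ I would pick measurable $f_0 : \mathcal{U} \to \mathbb{R}$ and $g_0 : \mathcal{V} \to \mathbb{R}$ satisfying the zero-mean, unit-variance constraints and
\[
E[f_0(U) g_0(V)] \geq HGR(U,V) - \eta/2.
\]
Using density of continuous compactly supported functions in $L^2(P_U)$ and $L^2(P_V)$, I may additionally assume $f_0, g_0$ continuous and bounded (re-centering and re-normalizing the approximants, which is a continuous operation provided variances stay bounded away from zero). Then, for a tolerance $\delta > 0$ to be fixed later, the universal approximation theorem of Hornik et al.\ yields neural networks $\tilde f_{w_f^*}$, $\tilde g_{w_g^*}$ with weights $w_f^*, w_g^*$ in compact subsets $W_f \subset \mathbb{R}^{k_f}$, $W_g \subset \mathbb{R}^{k_g}$ such that
\[
\|f_0 - \tilde f_{w_f^*}\|_{L^2(P_U)} \leq \delta, \qquad \|g_0 - \tilde g_{w_g^*}\|_{L^2(P_V)} \leq \delta.
\]
I would set $\Theta := W_f \times W_g$ and take $F_\Theta$ to be the family of pairs of standardized networks $(\hat f_{w_f}, \hat g_{w_g})$, where $\hat f_{w_f} = (\tilde f_{w_f} - E[\tilde f_{w_f}(U)])/\sqrt{\mathrm{Var}(\tilde f_{w_f}(U))}$ and analogously for $\hat g_{w_g}$, so that every element of $F_\Theta$ automatically satisfies the normalization constraints of \eqref{hgr}.

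The remaining step is a perturbation argument. By Cauchy--Schwarz one has $|E[\tilde f_{w_f^*}(U)]| \leq \delta$ and $|\mathrm{Var}(\tilde f_{w_f^*}(U)) - 1| \leq 2\delta + \delta^2$, and likewise for $\tilde g_{w_g^*}$; hence for $\delta$ small enough both variances are bounded below by $1/2$, and a direct expansion of the bilinear form gives
\[
\bigl|E[\hat f_{w_f^*}(U) \hat g_{w_g^*}(V)] - E[f_0(U) g_0(V)]\bigr| \leq C \delta,
\]
with $C$ depending only on the (essentially unit) variances. Choosing $\delta$ so that $C\delta \leq \eta/2$ yields $HGR_\Theta(U,V) \geq E[\hat f_{w_f^*} \hat g_{w_g^*}] \geq HGR(U,V) - \eta$, and the reverse inequality $HGR_\Theta \leq HGR$ is immediate since $F_\Theta$ is a subfamily of admissible test pairs. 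The main technical obstacle is precisely this last step: the $L^2$ approximation does not respect the normalization constraints, so one has to quantify carefully how small perturbations of the mean and of the variance propagate through the division by the standard deviation when estimating the shift in the correlation.
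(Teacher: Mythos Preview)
Your proposal is correct and follows essentially the same route as the paper's proof: pick near-optimal standardized $(f_0,g_0)$, approximate them in $L^2$ by neural networks via Hornik's universal approximation theorem, and then control how re-standardizing the approximants perturbs the correlation (the paper carries out this perturbation step explicitly, bounding $\|f^*-\tilde f\|_2^2$ in terms of $\|f^*-f\|_2$ and $|1-\sigma_f|$, which is exactly the ``main technical obstacle'' you identify). The only point the paper treats more carefully than your sketch is the construction of the compact $\Theta$: one must take it as a small enough neighborhood of $(w_f^*,w_g^*)$ so that every network in $F_\Theta$ still has strictly positive variance (by continuity of the variance in the parameters, together with compactness of $\mathcal U,\mathcal V$), which is what makes $HGR_\Theta$---equivalently, your standardized family---well-defined on all of $\Theta$, not just at the single point $(w_f^*,w_g^*)$.
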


The second lemma addresses the estimation problem, making use of classical consistency theorems for extremum estimators \cite{geer2000empirical}. It states the almost sure convergence of HGR\_NN to the associated theoretical neural HGR measure as the number of samples goes to infinity: 

\begin{lemma}{(estimation)} 
Let $\eta > 0$, and $F_{\Theta}$  a family of continuous neural networks  parametrized by a compact domain $\Theta \subset \mathbb{R}^{k}$. There exists an $N \in \mathbb{N}$ such that:
\begin{equation}
    \forall n \geq N,  
    \abs{\widehat{HGR(U,V)_n}-HGR_\Theta(U,V)} \leq \eta, a.s.
\end{equation}
\end{lemma}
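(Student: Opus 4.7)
The plan is to express both $HGR_\Theta(U,V)$ and $\widehat{HGR(U,V)}_n$ as suprema over the same compact parameter set $\Theta$ of a continuous functional of a handful of moments, and then deduce the result from a uniform law of large numbers applied to those moments. Writing $\theta=(w_f,w_g)\in\Theta$ and
\[
J(\theta)=\frac{E[f_{w_f}(U)g_{w_g}(V)]-E[f_{w_f}(U)]E[g_{w_g}(V)]}{\sqrt{\mathrm{Var}(f_{w_f}(U))}\,\sqrt{\mathrm{Var}(g_{w_g}(V))}},
\]
we have $HGR_\Theta(U,V)=\sup_{\theta\in\Theta}J(\theta)$, and the empirical estimator $\widehat{HGR(U,V)}_n=\sup_{\theta\in\Theta}\widehat{J}_n(\theta)$ has the same form with population moments replaced by sample averages over $(U_i,V_i)_{i=1}^n$. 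The basic inequality $\abs{\sup_\theta \widehat{J}_n(\theta)-\sup_\theta J(\theta)}\le \sup_\theta \abs{\widehat{J}_n(\theta)-J(\theta)}$ reduces everything to proving uniform convergence of $\widehat{J}_n$ to $J$ on $\Theta$.

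I would then invoke a uniform strong law of large numbers (e.g.\ \cite{geer2000empirical}) separately for each of the five moment functionals $\theta\mapsto E[f_{w_f}(U)]$, $E[g_{w_g}(V)]$, $E[f_{w_f}^2(U)]$, $E[g_{w_g}^2(V)]$, $E[f_{w_f}(U)g_{w_g}(V)]$. Because $\Theta$ is compact and the networks $(x,\theta)\mapsto f_{w_f}(x)$, $g_{w_g}(y)$ are jointly continuous in $(x,\theta)$ with a Lipschitz dependence on $\theta$, the associated function classes are Glivenko--Cantelli as soon as they admit an integrable envelope --- this is ensured either by taking bounded activations (a standard assumption also used to derive the approximation lemma) or by assuming that $(U,V)$ has sufficiently light tails so that $\sup_{\theta\in\Theta}\abs{f_{w_f}(U)}$ and $\sup_{\theta\in\Theta}\abs{g_{w_g}(V)}$ are integrable. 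Each of the five empirical moments then converges almost surely to its population counterpart, uniformly over $\Theta$.

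Finally, I would transfer this uniform moment convergence to $\widehat{J}_n\to J$. Since $J$ is a rational function of those moments, it is locally Lipschitz away from the zero set of the denominator; restricting (without loss of generality) to $\theta\in\Theta$ for which both variances are bounded below by some $\delta>0$ --- a restriction that does not affect the supremum because degenerate $f,g$ yield $\rho=0$ --- the denominator stays uniformly bounded away from $0$. Continuity of the rational map on this compact, non-degenerate set gives a uniform Lipschitz constant, so uniform convergence of the moments implies $\sup_{\theta\in\Theta}\abs{\widehat{J}_n(\theta)-J(\theta)}\to 0$ almost surely, and an $N$ can be chosen so that this supremum is at most $\eta$ for $n\ge N$. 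The main obstacle I anticipate is precisely this non-degeneracy issue: one must argue carefully that the ratio form of the Pearson correlation does not spoil uniform convergence, by bounding the variances away from zero on the effective parameter region and by ensuring an integrable envelope so that the uniform strong law applies to the entire compact class at once.
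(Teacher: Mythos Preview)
Your strategy is essentially the paper's: bound $\abs{\sup_\theta \widehat J_n-\sup_\theta J}$ by $\sup_\theta\abs{\widehat J_n-J}$, apply a uniform strong law to the five moment functionals, and push uniform convergence through the rational map defining Pearson's $\rho$. The paper executes exactly this, citing the same reference for the uniform law of large numbers.

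Where you diverge is in the two technical conditions, and in both places the paper's standing assumptions make life easier than your workarounds. First, the paper assumes $\mathcal U,\mathcal V$ compact; joint continuity of $(\theta,u,v)\mapsto(f_{\theta_f}(u),g_{\theta_g}(v))$ on the compact $\Theta\times\mathcal U\times\mathcal V$ then gives a uniform bound directly, so no appeal to bounded activations or tail assumptions is needed. Second, and more importantly, the paper's definition of $HGR_\Theta$ requires $\mathrm{Var}(f_{\theta_f}(U))>0$ and $\mathrm{Var}(g_{\theta_g}(V))>0$ for \emph{every} $\theta\in\Theta$ (this is arranged in the approximation lemma when $\Theta$ is constructed). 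Continuity of the variance in $\theta$ plus compactness of $\Theta$ then yields a uniform positive lower bound for both denominators, and the division step goes through on all of $\Theta$.

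Your restriction-to-$\Theta_\delta$ argument, by contrast, has a real gap. The claim that ``degenerate $f,g$ yield $\rho=0$'' is not enough: for $\theta$ with small but positive variance, $\rho(f_{\theta_f}(U),g_{\theta_g}(V))$ can be anywhere in $[-1,1]$, so excising $\{\theta:\mathrm{Var}<\delta\}$ may genuinely change the population supremum. Even if you patched that, you would still need to control $\sup_{\theta\notin\Theta_\delta}\widehat J_n(\theta)$, since the empirical supremum ranges over all of $\Theta$, not just $\Theta_\delta$. The paper avoids this entirely by never allowing zero-variance parameters into $\Theta$ in the first place; you should adopt that assumption (it is implicit in the well-definedness of $HGR_\Theta$) rather than try to excise a bad region after the fact.
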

It is implied here that, from rank $N$, all sample variances are positive in the definition of $\widehat{HGR(U,V)_n}$, which makes the latter well-defined. \

We deduce from these two lemmas the following result:
\begin{theorem}
$\widehat{HGR(U,V)}_{n}$ is strongly consistent.
\end{theorem}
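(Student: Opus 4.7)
The plan is to deduce the theorem directly from the two lemmas by a simple triangle inequality argument, splitting the total error between the true HGR and its neural-network empirical estimator into an approximation part (handled by Lemma 1) and an estimation part (handled by Lemma 2).

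More precisely, given an arbitrary $\epsilon > 0$, I would first apply Lemma 1 with $\eta := \epsilon/2$ to obtain a family $F_\Theta$ of continuous neural networks, parametrized by a compact $\Theta \subset \mathbb{R}^k$, such that
\begin{equation*}
\abs{HGR(U,V) - HGR_\Theta(U,V)} \leq \epsilon/2.
\end{equation*}
With this family now fixed, I would then apply Lemma 2, again with $\eta := \epsilon/2$, to obtain an integer $N \in \mathbb{N}$ such that for every $n \geq N$,
\begin{equation*}
\abs{\widehat{HGR(U,V)}_n - HGR_\Theta(U,V)} \leq \epsilon/2, \quad a.s.
\end{equation*}
Combining the two bounds via the triangle inequality yields $\abs{HGR(U,V) - \widehat{HGR(U,V)}_n} \leq \epsilon$ almost surely for every $n \geq N$, which is exactly the definition of strong consistency.

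Since each lemma has already done the substantive work, there is no real technical obstacle left at the theorem level. The only point that requires a small amount of care is the almost-sure clause: Lemma 1 is purely deterministic (it only produces the family $\Theta$), so the exceptional null set comes entirely from Lemma 2, and the triangle inequality preserves the a.s. statement on that same null set. I would also note, as the authors already do after Lemma 2, that from rank $N$ the sample variances appearing in the denominators of $\widehat{HGR(U,V)}_n$ are positive almost surely, so the estimator is well-defined on the event on which the bound holds. With these remarks the proof is essentially a one-line combination of the two lemmas.
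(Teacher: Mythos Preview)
Your proposal is correct and matches the paper's own proof, which simply states that the theorem is a direct consequence of Lemma~1 combined with Lemma~2. Your explicit $\epsilon/2$ triangle-inequality argument is exactly the intended combination, just spelled out in more detail than the paper provides.
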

\subsection{Comparison with simple adversarial algorithms}
Given $X$ and $Y$ two one-dimensional random variables, we consider the regression problem:
\begin{align}
    \inf_{f:\mathbb{R}\rightarrow \mathbb{R}} E((Y - f(X))^2)
\end{align}
The variable that minimizes the quadratic risk is $E(Y|X)$. We consider the maximization problem $\sup_{f:\mathbb{R}\rightarrow \mathbb{R}} \rho(f(X),Y)$, which corresponds to the situation where the neural network $g$ is linear in the HGR neural estimator. We have the following result: 
\begin{theorem}
If $E(Y|X)$ is constant, then $\sup_{f} \rho(f(X),Y) = 0$. Else,  $f^{*} \in \argmax_{f} \rho(f(X),Y) $ iff there exists $a,b \in \mathbb{R} $, with $a > 0$, such that:
 \begin{align}
     f^{*}(X) = a E(Y|X) + b
 \end{align}
\end{theorem}
The simpler version of the HGR\_NN, with $g$ linear, finds the optimal function in terms of regression risk, up to a linear transformation that can be found by simple linear regression. The simplified HGR estimation module therefore captures the exact same non-linear dependencies as the predictive adversary in related work \cite{adel2019one,zhang2018mitigating}. 
Thanks to the function $g$, in cases where $Y$ cannot be expressed as a function of $X$ only, the HGR neural network can capture more dependencies than a predictive NN (or equivalently a simplified HGR neural network). Let us consider the following example:
\begin{align}
    \centering
    Y \sim \mathcal{N}(\mu, \sigma^{2}) \hspace{1 cm} X = \arctan(Y^{2}) + U\pi 
\end{align}
where  $U \perp Y$  and $U$ follows a Bernoulli distribution with  $p = \frac{1}{2}$. In this setting, we have $Y^{2} = \tan(X)$, $HGR(X,Y)=1$ and due  to the hidden variable $U$, neither $X$ nor $Y$ can be expressed as a function of the other. In that case, the simplified maximal correlation, $\rho(E(Y|X),Y)$, has the following bounds, with $\alpha = \frac{\mu}{\sigma}$:
    $\sqrt{1-e^{-\frac{\alpha^2}{2}}}\leq \rho(E(Y|X),Y) \leq \sqrt{1-e^{-\frac{\alpha^2}{2}}(1+\alpha^2)^{-\frac{3}{2}}}$.
In the degenerate case $\alpha=0$, we have $E(Y|X)=0$: the predictive neural network cannot find any dependence. For non-zero values of $\alpha$, the distribution of $Y$ is no longer centered around the axis of symmetry of the square function, so that the prediction becomes possible. However, as shown in the inequality above, 
the simplified maximal correlation is less than 1, and close to 0 when $\mu \ll \sigma$. Therefore, as shown by the example, the bilateral approach of the HGR, as opposed to the unilateral approach of predictive models, can capture more dependencies in complex regression scenarios.

In adversarial bias mitigation settings, predictive adversaries might not be able to properly detect bias, which makes it harder for the adversarial algorithm to mitigate it. In fact, prediction retreatment algorithms with predictive adversaries \cite{zhang2018mitigating} achieve the global fairness optimum when $E(S \vert \widehat{Y})=E(S)$
, which does not generally imply demographic parity when $S$ is continuous. On the other hand, adversarial approaches based on the HGR\_NN \cite{grari2019fairness} 
achieve the optimum when $HGR(\widehat{Y},S)=0$, which is equivalent to demographic parity: $P(\widehat{Y}\vert S)=P(\widehat{Y})$.

\section{Method} 
\label{sec:HGR_Adversarial_Network}

\begin{figure}[h]
  \centering
  \includegraphics[width=10.55cm]{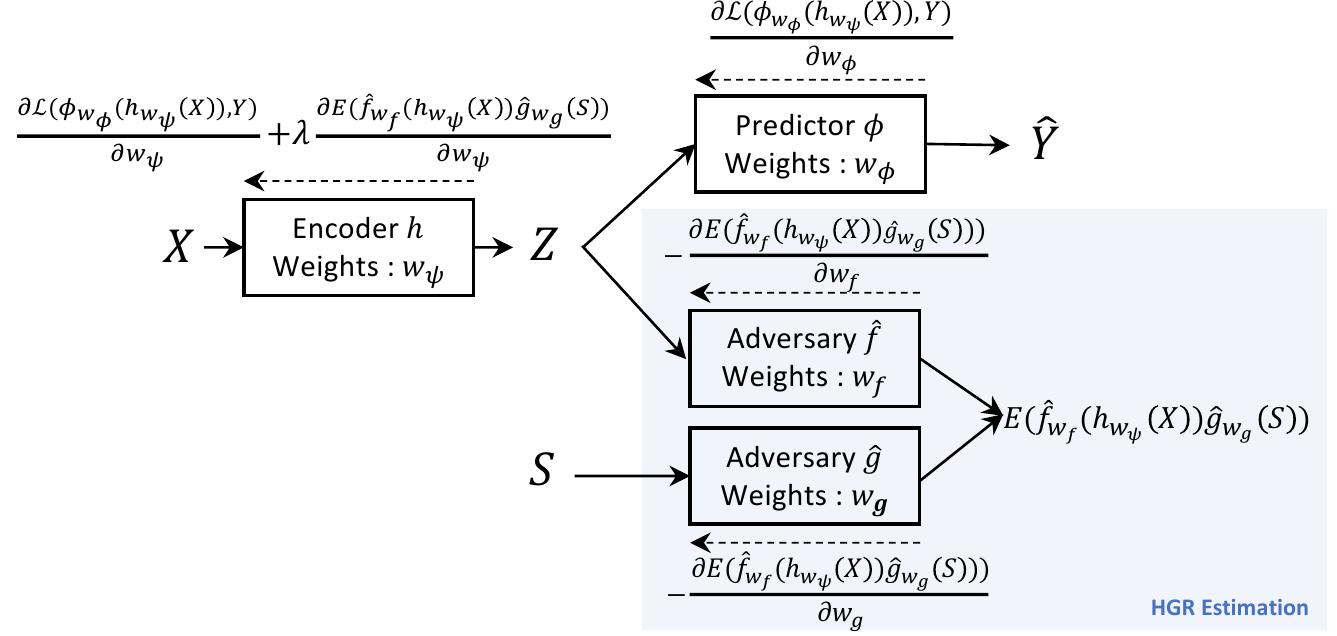}
  \caption{Learning Unbiased Representations via Rényi Minimization}
  \label{fig:hgr_rep_fair}
\end{figure}

The objective is to find a latent representation $Z$ which both minimizes the deviation between the target $Y$ and the output prediction $\widehat{Y}$, 
provided by a function $\phi(Z)$, and does not imply too much dependence with the sensitive $S$. As explained above in section \ref{sec:Problem_statement}, the HGR estimation by deep neural network \cite{grari2019fairness} is a good candidate for standing as the adversary $HGR(Z,S)$ to plug in the global objective (\ref{genericfunction}). Notice, we can consider the latent representation $Z$ or even the sensitive attribute $S$ as multi-dimensional. This can therefore provide a rich representation of the latent space or even take into account several sensitive features at the same time (for e.g. gender and age or the 3 channels of an image see \ref{sec:SyntheticScenario}). The HGR estimation paper \cite{grari2019fairness} considers only the one-dimensional cases for both $U$ and $V$ but we can generalize to the multidimensional cases.

The mitigation procedure follows the optimization problem: 
\begin{eqnarray}
\begin{aligned}
    \argmin_{w_{\phi},w_{\psi} }\max_{{w_f,w_g}} & \ \mathcal{L}(\phi_{\omega_\psi}(h_{\omega_\phi}(X)),Y)+
    \lambda E(\widehat{f}_{w_f}(h_{\omega_\psi}(X))*\widehat{g}_{w_g}(S))
    \label{genericfunction}
\end{aligned}
\end{eqnarray}
where  $\mathcal{L}$ is the predictor loss function 
between the output prediction $\phi_{\omega_\psi}(h_{\omega_\phi}(X)) \in \mathbb{R}$ and the corresponding target $Y$, with $\phi_{\omega_\phi}$ the predictor neural network with parameters $\omega_\phi$ and $Z = h_{\omega_\psi}(X)$ the latent fair representation with $h_{\omega_\psi}$ the encoder neural network, with parameters $\omega_\psi$. The second term, which corresponds to the expectation of the products of standardized outputs of both networks ($\hat{f}_{w_{f}}$ and $\hat{g}_{w_{g}}$), represents the HGR estimation between the latent variable $Z$  and the sensitive attribute $S$. The hyperparameter $\lambda$ controls the impact of the correlation loss in the optimization.

 Figure \ref{fig:hgr_rep_fair} gives the full architecture of our adversarial learning algorithm using the neural HGR estimator between the latent variable and the sensitive attribute. It depicts the encoder function $h_{w_{\psi}}$, which outputs a latent variable $Z$ from $X$, the two neural networks $f_{w_f}$ and $g_{w_g}$, which seek at defining the most strongly correlated transformations of $Z$ and $S$ and the neural network $\phi_{\omega_\phi}$ \vspace{-0.08cm} which outputs the prediction $\widehat{Y}$ from the latent variable $Z$. Left arrows represent gradients back-propagation. The learning is done via stochastic gradient, alternating steps of adversarial maximization and global loss minimization. 
The algorithm (more details in the supplementary) takes as input a training set from which it samples batches of size $b$ at each iteration. At each iteration it first standardizes the output scores of networks $f_{w_f}$ and $g_{w_g}$ to ensure 0 mean and a variance of 1 on the batch. Then it computes 
the HGR neural estimate and the prediction loss for the batch. 
At the end of each iteration, the algorithm updates the parameters of the adversary $w_f$ and $w_g$ by one step of gradient ascent and the prediction parameters $\omega_\phi$ as well as the encoder parameters $\omega_\psi$ by one step of gradient descent. 
\section{Experiments}
\label{sec:results}
\subsection{Synthetic Scenario}
\label{sec:SyntheticScenario}

Inspired by \cite{louppe2017learning}, we consider the following toy scenario in a binary target and continuous standard gaussian sensitive attribute setting: 
\begin{subequations}
    \begin{align}
        \centering
        X|S=s \sim \mathcal{N}
        \begin{bmatrix}
        \begin{pmatrix}
        0\\
        0
        \end{pmatrix}\!\!,&
        \begin{pmatrix}
        1 & -\frac{1}{2} \\
        -\frac{1}{2} & 1 
        \end{pmatrix}
        \end{bmatrix}
        \hspace{1 cm}
        &\text{when  } Y = 0, \\
        X|S=s \sim \mathcal{N}
        \begin{bmatrix}
        \begin{pmatrix}
        1\\
        1+3\sin{s}
        \end{pmatrix}\!\!,&
        \begin{pmatrix}
        1 & 0 \\
        0 & 1 
        \end{pmatrix}
        \end{bmatrix}
        \hspace{1 cm}
        &\text{when  } Y = 1
        \label{toy_sin}
        \centering
        \end{align}
\end{subequations}

\begin{figure}[H]
\centerfloat
\subfloat[Biased model: $\lambda=0$ ;  $HGR(Z,S)=52\%$ ; $HGR(\widehat{Y},S)=30\%$ ; $Acc=79\%$]{\includegraphics[scale=0.25]{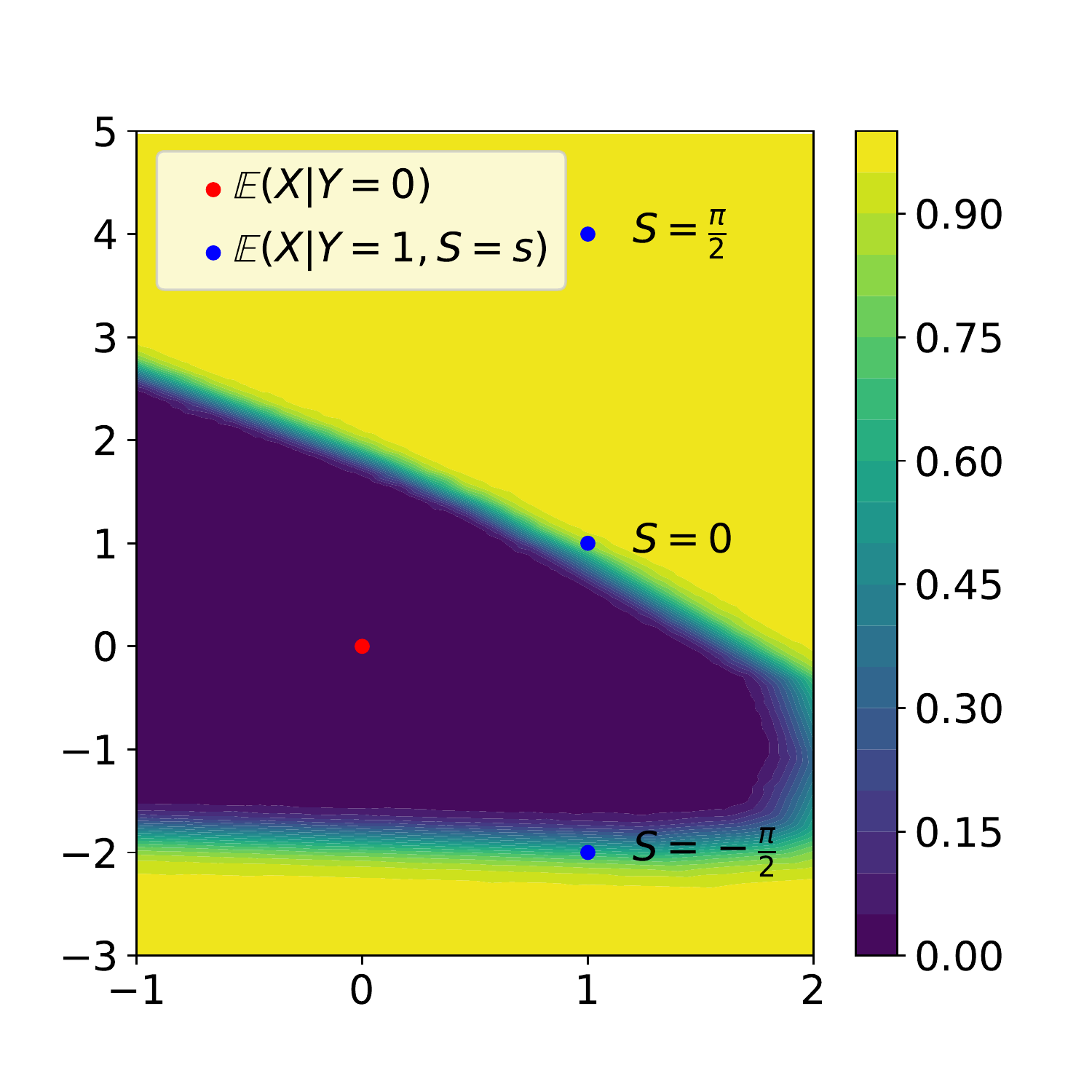} \\
\includegraphics[scale=0.25]{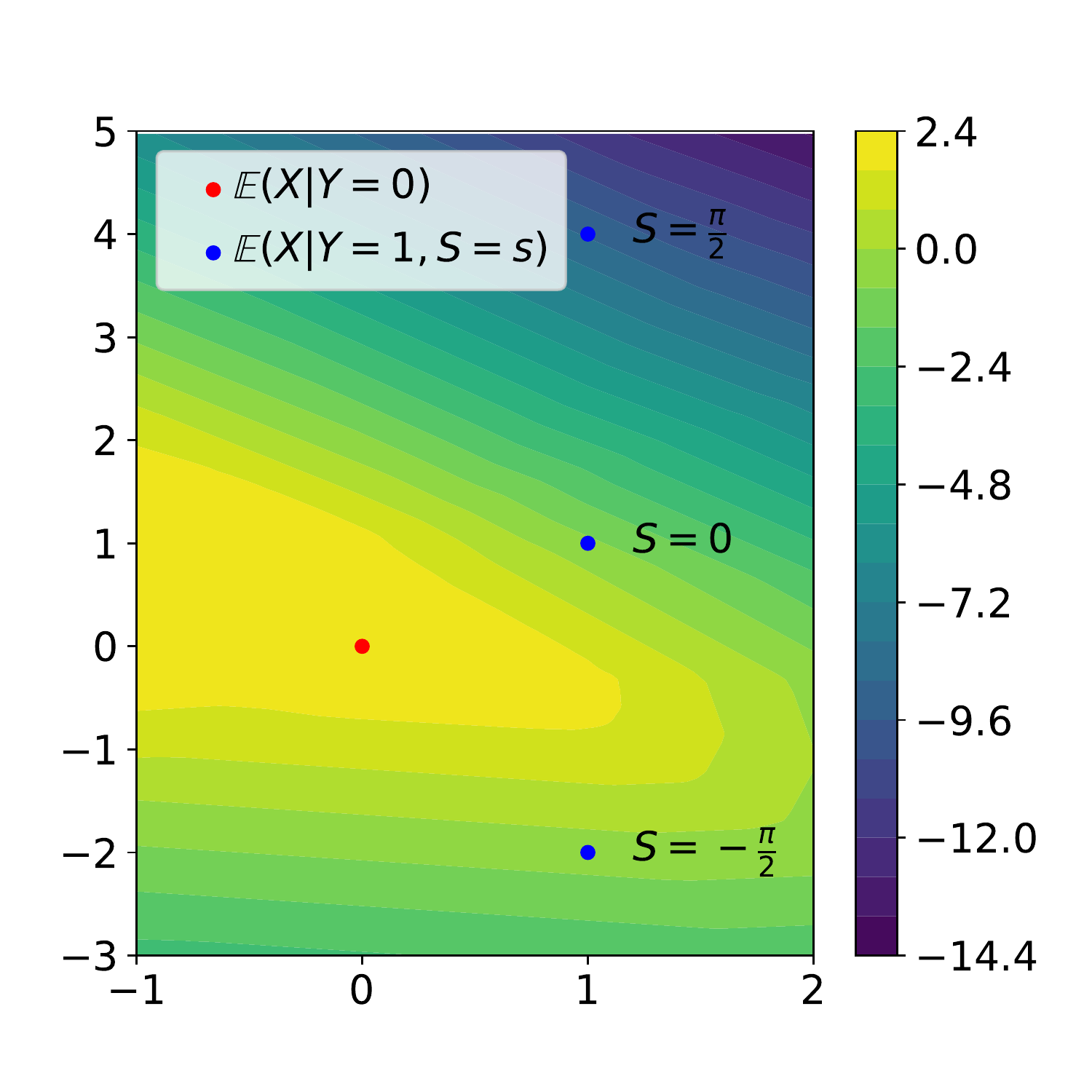} \\
\includegraphics[scale=0.25]{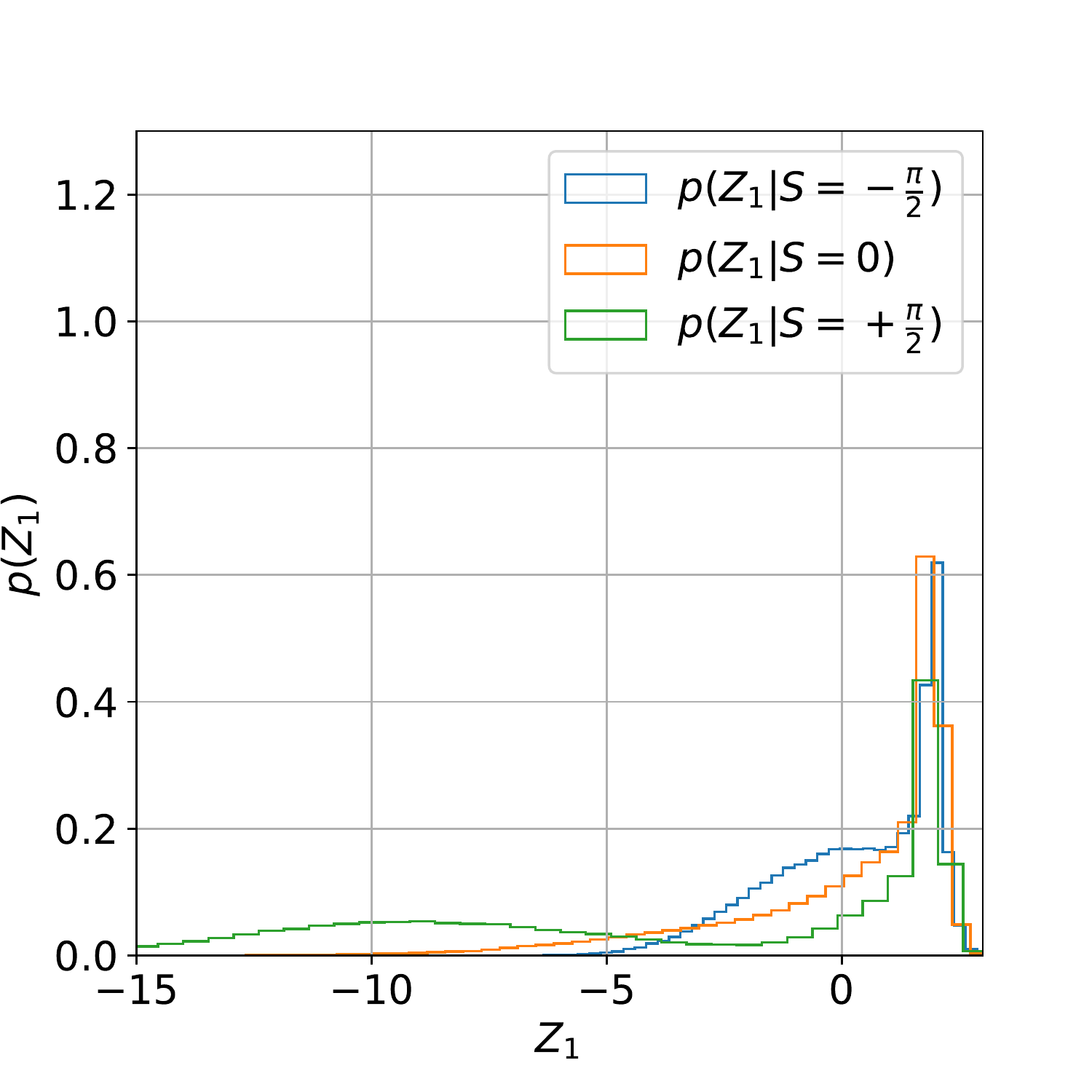} \\
\includegraphics[scale=0.25]{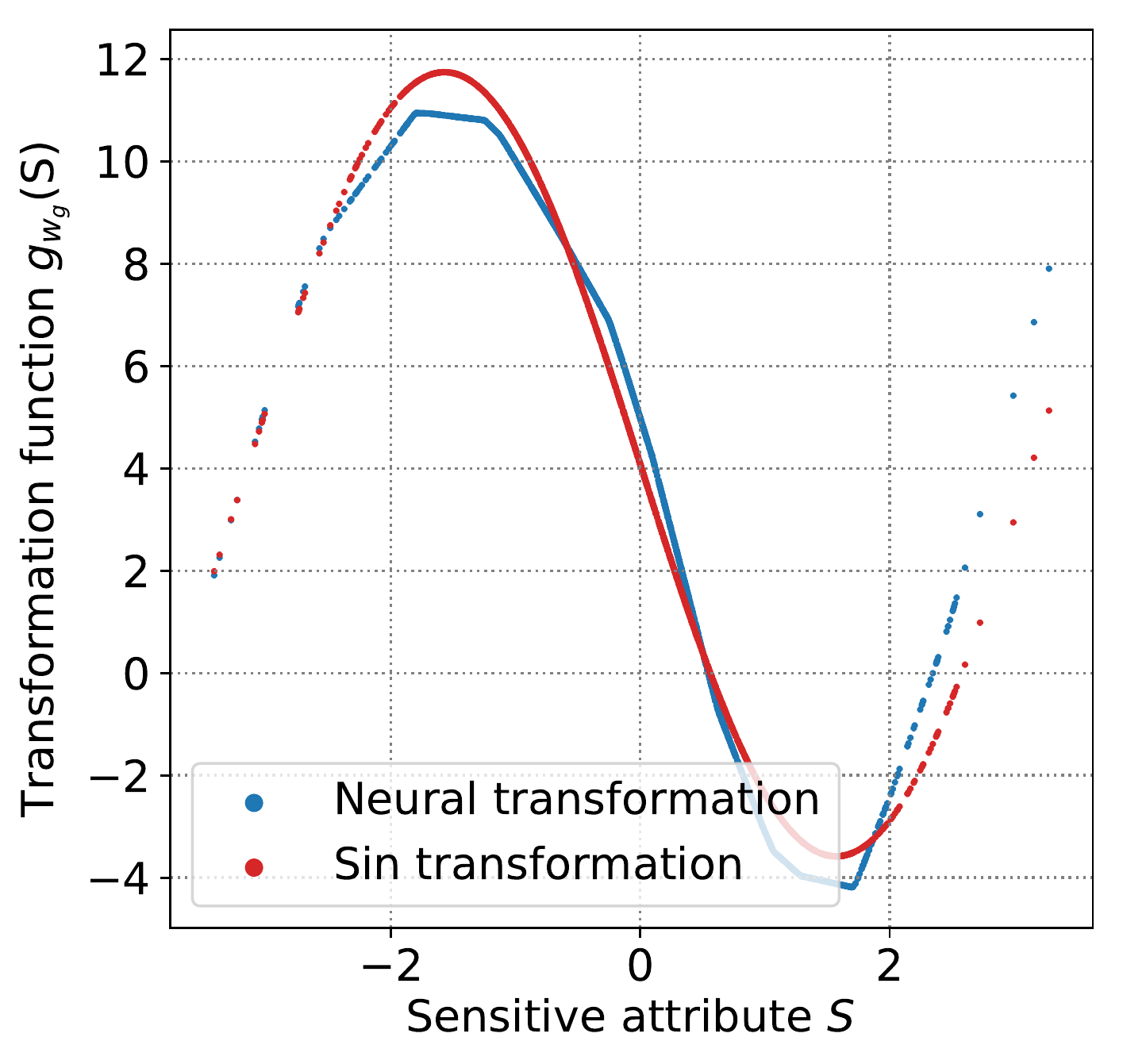}
}  \\
\subfloat[Unbiased model: $\lambda=13$ ;  $HGR(Z,S)=5\%$ ; $HGR(\widehat{Y},S)=4\%$ ; $Acc=68\%$]{\includegraphics[scale=0.25]{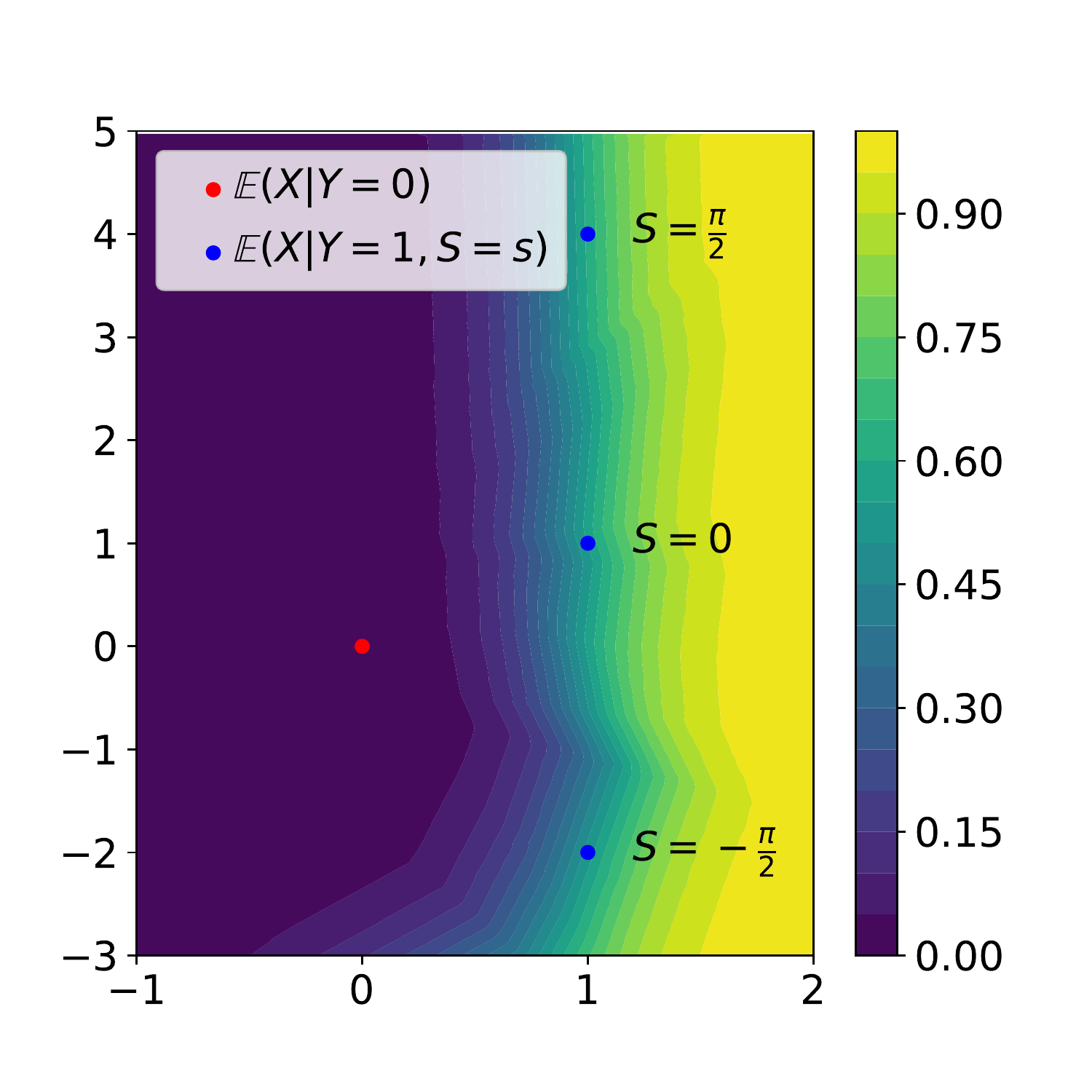} \\
\includegraphics[scale=0.25]{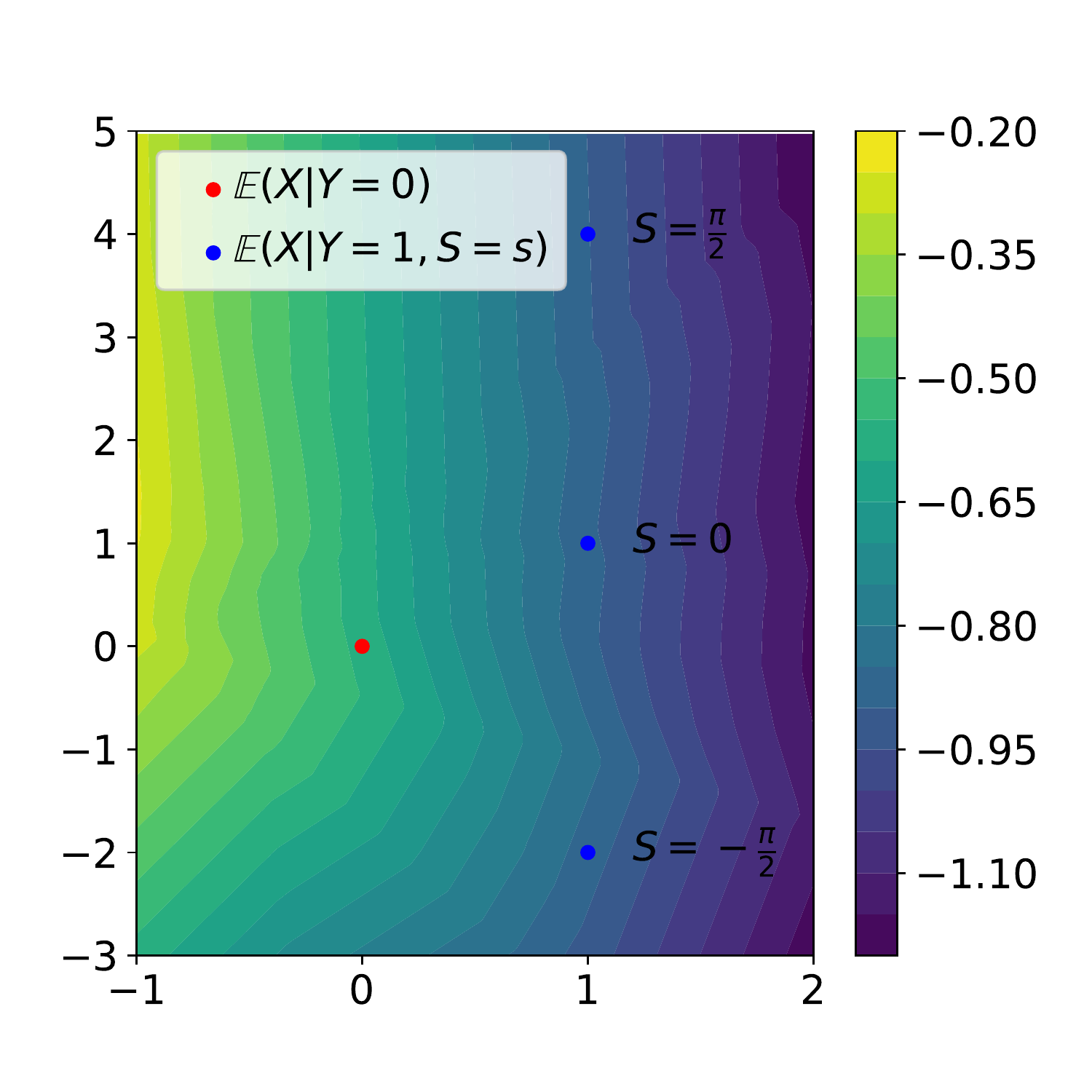} \\
\includegraphics[scale=0.25]{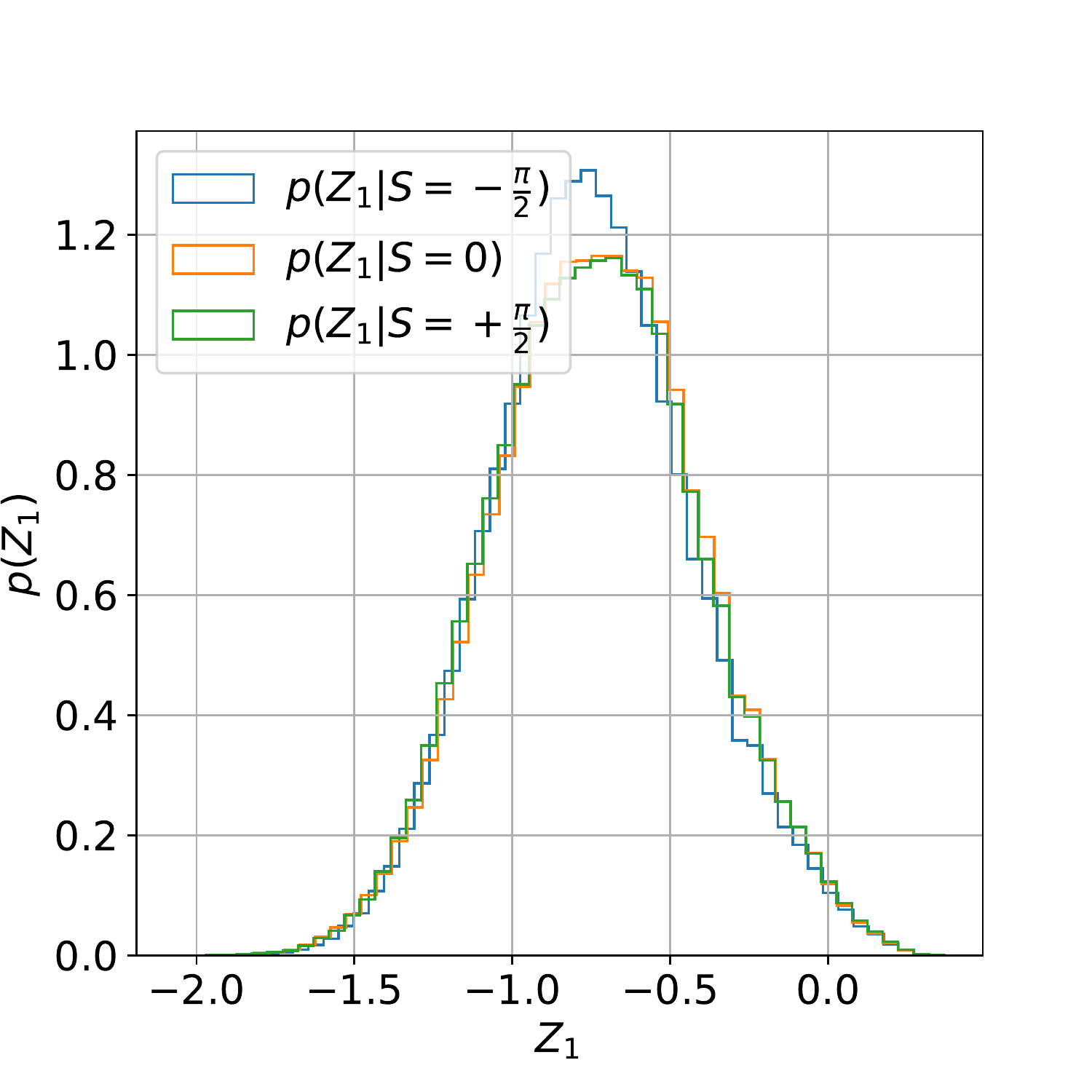} \\
\includegraphics[scale=0.25]{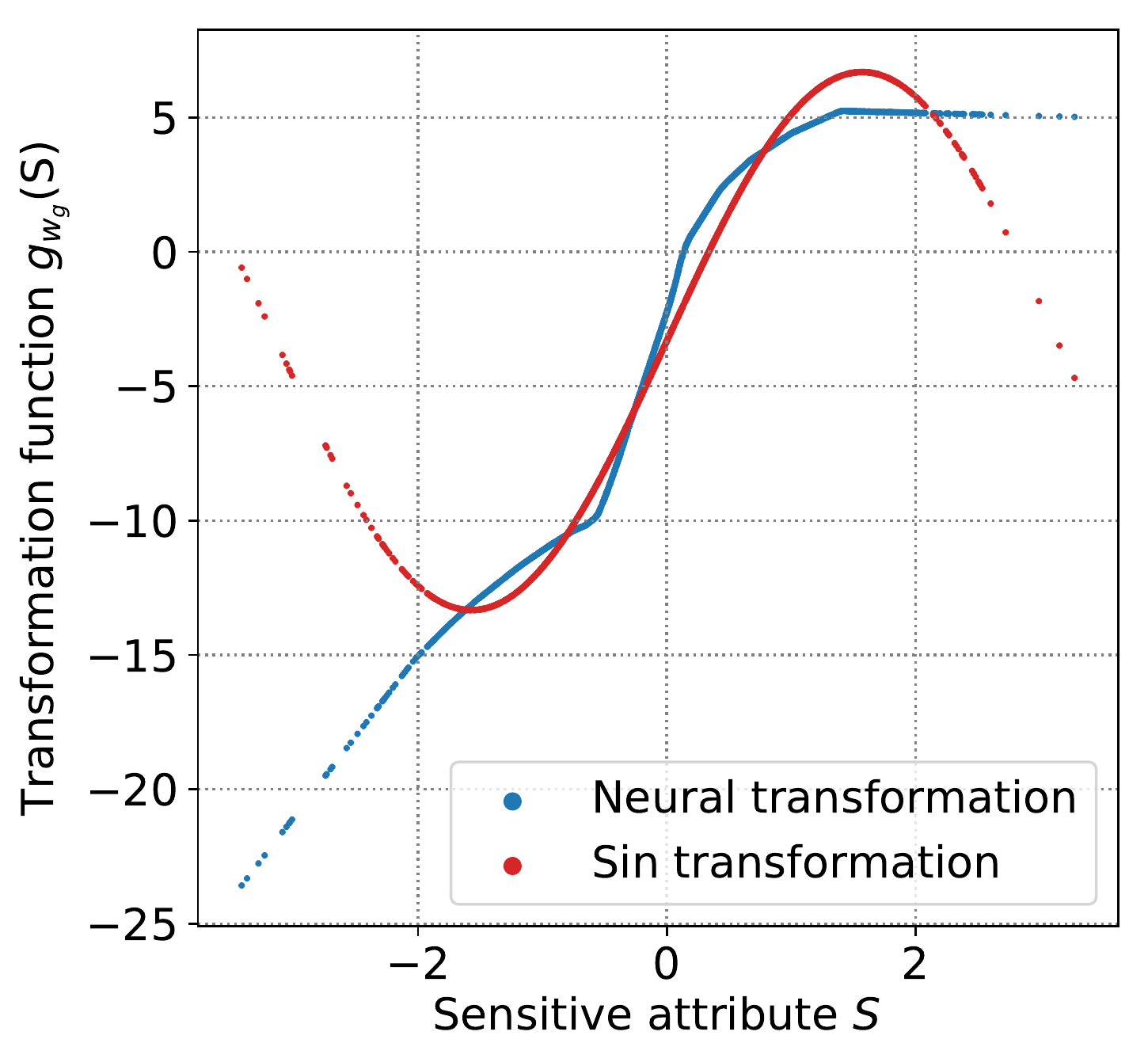}}

\caption{Toy example. (Left) Decision surface in the ($X_1$, $X_2$) plane. The figure (a) shows the decision surface for a biased model focused on a prediction loss. $\widehat{Y}$ values are highly correlated with $S$, samples with $S$ around $\frac{\pi}{2}$ and $Y=1$ being easier to classify than those with $S$ between $-\frac{\pi}{2}$ and $0$. The figure (b) shows decision surfaces for our fair model. \vspace{-0.1cm} These are vertical, meaning that only $X_1$  influences the classification, and therefore $\widehat{Y}$ is no longer biased w.r.t $S$. (Middle left) $Z_1$-slices in the ($X_1$, $X_2$) plane. The comparison between the figure below and above highlights the fact that adversarial training allows to create an unbiased representation $Z$. (Middle right) Conditional probability densities of $Z_1$ at $S=-\frac{\pi}{2}, 0, \frac{\pi}{2}$. With $\lambda = 0$, the densities are dependent on $S$, whereas they are not anymore with adversarial training. (Right) In blue, the function modeled by the neural network $g$ in the HGR Neural Network. In red, the closest linear transformation of $\sin(S)$ to $g(S)$.}
\label{figure_toy}
\end{figure}
Our goal is to learn a representation $Z$ of the input data that is no longer biased w.r.t $S$, while still accurately predicting the target value $Y$. Figure \ref{figure_toy}  compares the results of both a biased model (a) with a hyperparameter $\lambda = 0$ and an unbiased model (b) with $\lambda = 13$ applied on the toy scenario data. In the context of the Rényi Minimization method, it is interesting to observe the maximal correlation functions learnt by the adversary. When $\lambda = 0$, the adversary with sensitive attribute input models the $\sin$ function up to a linear transformation, which also maximizes the correlation with the input data as shown in \eqref{toy_sin}. In that case, the representation $Z$ still carries the bias of $X$ w.r.t $S$, in the same $\sin$ shape. When $\lambda = 13$, the neural network $g$ is unable to find the $\sin$ function, which seems to indicate that the representation $Z$ does not carry the bias w.r.t $S$ anymore. This is confirmed by the low HGR coefficient between $Z$ and $S$, the $Z_1$-slices as well as the conditional densities of $Z_1$ at different values of $S$. Not only does the adversarial induces  an unbiased representation, it also leads to an almost completely unbiased target $\widehat{Y}$, as shown by the vertical decision surfaces and the $4 \%$ HGR 
between $\widehat{Y}$ and $S$. 
This at the cost of of a slight loss of accuracy, with an $11 \%$ decrease.  


\subsection{MNIST with Continuous Color Intensity}

Before considering real-world experiments, we follow the MNIST experimental setup defined by Kim et al. \cite{kim2019learning}, which considers a digit classification task with a color bias planted into the MNIST data set \cite{lecun2010mnist, coloredMNIST}. In the training set, 
ten distinct colors are assigned to each class. More precisely, for a given training image, a color is sampled from the isotropic normal distribution with the corresponding class mean color, and a variance parameter $\sigma^2$. For a given test image, a mean color is randomly chosen from one of the ten mean colors, without considering the test label, and a color is sampled from the corresponding normal distribution (with variance $\sigma^2$). Seven transformations of the data set are designed with this protocol, with seven values of $\sigma^2$ equally spaced between 0.02 and 0.05. A lower value of $\sigma^2$ implies a higher color bias in the training set, making the classification task on the testing set more difficult, since the model can base its predictions on colors rather than shape.
The sensitive feature, color, is 
encoded as 
a vector with 3 continuous coordinates. 
For each algorithm and for each data set, we obtain the best hyperparameters by grid search in five-fold cross validation. 

Results, 
in terms of accuracy, can be found in Table \ref{tab:continuous_case}. Notice, the state-of-the-art obtains different results than reported ones because we consider a continuous sensitive feature and not a 24-bit binary encoding. Our adversarial algorithm achieves the best accuracy on the test set for the seven scenarios. 
The most important gap is for the smallest sigma where the generalisation is the most difficult. The larger number of degrees of freedom carried by the two functions $f$ and $g$ made it possible to capture more unbiased information than the other algorithms on the multidimensional  
variables $Z$ and $S$.

\begin{table}[h]
\label{tab:continuous_case}
\centering
\resizebox{\textwidth}{!}{
\begin{tabular}{@{}rccccccc@{}}
\toprule
\multicolumn{1}{c}{} & \multicolumn{7}{c}{Color variance}                    \\ \midrule
Training      & $\sigma$ = 0.020     & $\sigma$= 0.025     & $\sigma$ = 0.030     & $\sigma$ = 0.035     & $\sigma$ = 0.040     & $\sigma$ = 0.045     & $\sigma$ = 0.050     \\ \midrule
ERM ($\lambda$ = 0.0) & 0.476 $\pm$ 0.005 & 0.542 $\pm$ 0.004 & 0.664 $\pm$ 0.001 & 0.720 $\pm$ 0.010 & 0.785 $\pm$ 0.003 & 0.838 $\pm$ 0.002 & 0.870 $\pm$ 0.001 \\
Ragonesi \cite{ragonesi2020learning}  &  0.592 $\pm$ 0.018  & 0.678 $\pm$ 0.015 & 0.737 $\pm$ 0.028 & 0.795 $\pm$ 0.012 & 0.814 $\pm$ 0.019 & 0.837 $\pm$ 0.004 & 0.877 $\pm$ 0.010 \\ 

Zhang et al. \cite{zhang2018mitigating}  &  0.584 $\pm$ 0.034 & 0.625 $\pm$ 0.033  & 0.709 $\pm$ 0.027 & 0.733 $\pm$ 0.020 & 0.807 $\pm$ 0.013 & 0.803 $\pm$ 0.027  & 0.831 $\pm$ 0.027 \\ 
Kim et al.  \cite{kim2019learning}  & 0.645 $\pm$ 0.015 & 0.720 $\pm$ 0.014 & 0.787 $\pm$ 0.018 & 0.827 $\pm$ 0.012 & 0.869 $\pm$ 0.023 & 0.882 $\pm$ 0.019 & 0.900 $\pm$ 0.012 \\ 
Grari et al. \cite{grari2019fairness}  & 0.571 $\pm$ 0.014 & 0.655 $\pm$ 0.022 & 0.721 $\pm$ 0.030 & 0.779 $\pm$ 0.011 & 0.823 $\pm$ 0.013 & 0.833 $\pm$ 0.026 &  0.879 $\pm$ 0.010 \\ 
Adek et al. \cite{grari2019fairness}  & 0.643 $\pm$ 0.014 & 0.655 $\pm$ 0.022 & 0.721 $\pm$ 0.030 & 0.779 $\pm$ 0.011 & 0.823 $\pm$ 0.013 & 0.833 $\pm$ 0.026 &  0.879 $\pm$ 0.010 \\ 
Ours  & \textbf{0.730} $\pm$ 0.008 & \textbf{0.762} $\pm$ 0.021 & \textbf{0.808} $\pm$ 0.011 & \textbf{0.838} $\pm$ 0.010 & \textbf{0.878} $\pm$ 0.011 & \textbf{0.883} $\pm$ 0.012 & \textbf{0.910} $\pm$ 0.007 \\
\bottomrule
\hline
\end{tabular}
}
\caption{MNIST with continuous color intensity}
\end{table}

\subsection{Real-world Experiments}
Our experiments on real-world data are performed on five data sets.
In three data sets, the sensitive and the outcome true value are both continuous: the US Census data set \cite{USCensus}, the Motor data set  \cite{pricinggame15} and the Crime data set \cite{Dua:2019}. On two other data sets, the target is binary 
and the sensitive features are continuous: The COMPAS data set~\cite{angwin2016machine} and the Default data set~\cite{Yeh:2009:CDM:1464526.1465163}. 
For all data sets, we repeat five experiments by randomly sampling two subsets, 80\% for the training set and 20\% for the test set. Finally, we report the average of the mean squared error (MSE), the accuracy (ACC) and the mean of the fairness metrics HGR\_NN \cite{grari2019fairness}, HGR\_KDE \cite{mary2019fairness_full}, HGR\_RDC \cite{lopez2013randomized} and MINE \cite{belghazi2018mutual} on the test set. Since none of these fairness measures are fully reliable (they are only estimations which are used by the compared models), we also use    the $FairQuant$ metric \cite{grari2019fairness},  based on the   
quantization of the test samples 
in 50 quantiles w.r.t. to the sensitive attribute. The metric corresponds to the mean absolute difference between the global average prediction and the mean prediction of each quantile. 

As a baseline, we use a classic, "unfair" deep neural network, Standard NN. 
We compare our approach with state-of-the-art algorithms. 
We also compare 
the Fair MINE NN\cite{grari2019fairness} algorithm where fairness is achieved with the MINE estimation of the mutual information as a penalization in prediction retreatment (lower right in Figure 1 in appendix).
For all the different fair representation algorithms, we assign the latent space with only one hidden layer with 64 units.  
Mean normalization was applied to all the 
outcome true values. Results of our experiments can be found in Table~\ref{tab:results_demographic_parity}. For all of them, we attempted to obtain comparable results by giving similar accuracy to all models, 
via the hyperparameter $\lambda$ (different for each model). For each algorithm and for each data set, we obtain the best hyperparameters by grid search in five-fold cross validation (specific to each of them). 


As expected, the baseline, Standard NN, is the best predictor but also the most biased one. It achieves the lowest prediction errors and ranks amongst the highest and thus worst values for all fairness measures for all data sets and tasks. While being better in terms of 
accuracy, our fair representation algorithm achieves on four data sets (except on the Crime data set) the best level of fairness assessed by HGR estimation, MINE and FairQuant. On the Crime data set, the approach by Mary2019~\cite{mary2019fairness_full}~\footnotemark[2] gets slightly better results but with a very high volatility. Note, Adel \cite{adel2019one} with the fair adversarial representation obtains (except on the Crime data set) better results  than Zhang \cite{zhang2018mitigating} which corresponds to the simple adversarial architecture.



\begin{table*}[!h]
\label{tab:results_demographic_parity}
\centering

\resizebox{\textwidth}{!}{
\begin{tabular}{||l|l||l|l|l|l|l|l||}
\hhline{========} 
 \cline{3-7}
\multicolumn{2}{||c||}{ } & MSE & HGR\_NN & HGR\_KDE & HGR\_RDC & MINE & FairQuant  \\ 
\hline
\multirow{7}{*}{\rotatebox[origin=c]{90}{US Census}}
& Standard NN & 0.274 $\pm$ 0.003 & 0.212 $\pm$ 0.094 & 0.181$\pm$ 0.00 & 0.217 $\pm$ 0.004 & 0.023 $\pm$ 0.018 & 0.059 $\pm$ 0.00 \\
& Grari et al. \cite{grari2019fairness} & 
0.526 $\pm$  0.042 & 0.057 $\pm$  0.011 & 0.046 $\pm$  0.030  & 0.042 $\pm$  0.038  & \textbf{0.001} $\pm$ 0.001 & 0.008 $\pm$ 0.015 \\
& Mary2019~\cite{mary2019fairness_full}  & 0.541  $\pm$ 0.015   & 0.075  $\pm$ 0.013 & 0.061  $\pm$   0.006 & 0.078  $\pm$  0.013   & 0.002 $\pm$ 0.001 &  0.019  $\pm$  0.004  \\
& Fair MINE NN & 0.537 $\pm$ 0.046 & 0.058 $\pm$ 0.042 & 0.048 $\pm$ 0.029 & 0.045 $\pm$ 0.037  & \textbf{0.001} $\pm$ 0.001 & 0.012 $\pm$  0.016 \\
& Adel \cite{adel2019one}  &  0.552  $\pm$ 0.032  &  0.100  $\pm$  0.028 &   0.138 $\pm$ 0.042  &  0.146  $\pm$ 0.031   & 0.003 $\pm$ 0.003 &  0.035  $\pm$  0.011 \\

& Zhang et al. \cite{zhang2018mitigating}   &  0.727 $\pm$ 0.264  &  0.097 $\pm$ 0.038 &  0.135  $\pm$ 0.036  &  0.165 $\pm$ 0.028   & 0.009 $\pm$ 0.005 &  0.022  $\pm$  0.019    \\  

& Madras et al.    \cite{pmlr-v80-madras18a}  &  0.512 $\pm$ 0.033  &  0.129  $\pm$  0.010  & 0.158  $\pm$ 0.009  &  0.173 $\pm$ 0.012  & 0.007 $\pm$  0.007  & 0.041 $\pm$ 0.003 \\ 

& Sadeghi et al. \cite{sadeghi2019global} &  0.526 $\pm$ 0.006  & 0.077 $\pm$ 0.031 &   0.136 $\pm$ 	0.001 & 0.146 $\pm$ 0.001 &   0.008 $\pm$ 0.003  &  0.035 $\pm$ 0.000  \\ 
& Ours  &  \textbf{0.523} $\pm$ 0.035  &  \textbf{0.054} $\pm$ 0.015  &  \textbf{0.044} $\pm$ 0.032  & \textbf{0.041}  $\pm$  0.031  & \textbf{0.001} $\pm$ 0.001 &  \textbf{0.007}  $\pm$ 0.002   \\ 
\hline
\multirow{7}{*}{\rotatebox[origin=c]{90}{Motor}}
& Standard NN & 0.945  $\pm$ 0.011   & 0.201  $\pm$ 0.094 & 0.175  $\pm$ 0.0 & 0.200  $\pm$ 0.034  & 0.188 $\pm$ 0.005 & 0.008   $\pm$ 0.011  \\ 
& Grari et al. \cite{grari2019fairness} & 0.971   $\pm$ 0.004  & 0.072 $\pm$ 0.029 & 0.058 $\pm$ 0.052 &  \textbf{0.066} $\pm$ 0.009  & \textbf{0.000} $\pm$ 0.000 & 0.006  $\pm$  0.02 \\ 
& Mary2019~\cite{mary2019fairness_full}  & 0.979 $\pm$ 0.119   & 0.077 $\pm$  0.023 & 0.059  $\pm$  0.014 & 0.067  $\pm$  0.028  & 0.001 $\pm$ 0.001 & 0.006  $\pm$   0.002   \\ 
& Fair MINE NN & 0.982  $\pm$ 0.003 & 0.078   $\pm$ 0.013 & 0.068 $\pm$ 0.004 & 0.069 $\pm$ 0.009  & \textbf{0.000} $\pm$ 0.000 & \textbf{0.004} $\pm$ 0.001  \\

& Adel \cite{adel2019one} &   0.979 $\pm$  0.003 & 0.101 $\pm$ 0.04 &  0.09  $\pm$  0.03   &  0.101  $\pm$  0.04  & 0.002 $\pm$ 0.002 &  0.009  $\pm$  0.004    \\

& Zhang et al. \cite{zhang2018mitigating}   & 0.998  $\pm$ 0.004  & 0.076 $\pm$ 0.034 &  0.091 $\pm$ 0.024  &  0.129 $\pm$  0.08   & 0.001 $\pm$ 0.001 &  \textbf{0.004} $\pm$  0.001    \\ 
& Madras et al.    \cite{pmlr-v80-madras18a}  & 0.978  $\pm$ 0.004   &  0.096  $\pm$ 0.035  & 0.083  $\pm$  0.020 &   0.099 $\pm$ 0.030 &  0.004 $\pm$ 0.002  & 0.008 $\pm$  0.001   \\ 
						
& Sadeghi et al. \cite{sadeghi2019global} &  0.975 $\pm$ 0.017 &  0.102 $\pm$ 0.020 & 0.115  $\pm$ 0.027 & 0.129 $\pm$ 0.039 & 0.001  $\pm$ 0.001  &	0.001  $\pm$  0.001 \\ 							

& Ours  &   \textbf{0.962} $\pm$ 0.002  & \textbf{0.070}  $\pm$ 0.011  &  \textbf{0.055} $\pm$  0.005  &  0.067 $\pm$   0.006  & \textbf{0.000} $\pm$ 0.000 & \textbf{0.004}  $\pm$  0.001     \\ 
\hline
\multirow{7}{*}{\rotatebox[origin=c]{90}{Crime}}
& Standard NN                        &    0.384   $\pm$ 0.012   &  0.732  $\pm$ 0.013 &  0.525   $\pm$ 0.013  &  0.731    $\pm$ 0.009  & 0.315 $\pm$ 0.021 & 0.353   $\pm$ 0.006  \\ 
& Grari et al. \cite{grari2019fairness}    &     0.781 $\pm$  0.016    &  \textbf{0.356} $\pm$ 0.063 &  0.097  $\pm$ 0.022  &  \textbf{0.171} $\pm$ 0.03   & \textbf{0.009} $\pm$ 0.008 & \textbf{0.039}$\pm$ 0.008  \\ 
& Mary2019~\cite{mary2019fairness_full}    &   \textbf{0.778} $\pm$ 0.103  &   0.371 $\pm$  0.116   &  0.115 $\pm$ 0.046 &  0.177  $\pm$ 0.054   & 0.024 $\pm$ 0.015 &      0.064 $\pm$ 0.023 \\ 
& Fair MINE NN  &   0.782 $\pm$  0.034  &  0.395 $\pm$ 0.097 &  0.110 $\pm$  0.022 &  0.201 $\pm$ 0.021   & 0.032 $\pm$ 0.029 &  0.136 $\pm$ 0.012   \\ 
& Adel \cite{adel2019one}    &   0.836 $\pm$ 0.005  &  0.384 $\pm$ 0.037 &  0.170 $\pm$ 0.027  &  0.371 $\pm$ 0.035   & 0.058 $\pm$ 0.027 &  0.057  $\pm$ 0.007    \\

& Zhang et al.    \cite{zhang2018mitigating}  &  0.787 $\pm$ 0.134  & 0.377 $\pm$ 0.085 &  0.153 $\pm$ 0.056 &  0.313 $\pm$ 0.087   & 0.037 $\pm$ 0.022 & 0.063 $\pm$   0.046   \\ 

& Madras et al.    \cite{pmlr-v80-madras18a}  &  0.725  $\pm$  0.023 &   0.312 $\pm$ 0.022  &  0.290 $\pm$ 0.027 &    0.175 $\pm$ 0.016 &  0.036 $\pm$  0.013  & 0.103  $\pm$ 0.015  \\ 

& Sadeghi et al. \cite{sadeghi2019global} &  0.782 $\pm$ 	0.002 & 0.474  $\pm$  0.006 &   0.123 $\pm$ 0.000 & 0.315 $\pm$ 0,009 &   0.098 $\pm$  0.035  & 0.062  $\pm$ 0.001  \\ 

& Ours &  0.783 $\pm$ 0.031 \hspace*{1px} & 0.369 $\pm$ 0.074 &  \textbf{0.087} $\pm$ 0.031  &  0.173 $\pm$  0.044 & 0.011 $\pm$ 0.006 &  0.043 $\pm$  0.012    \\ 
\end{tabular}
}
\resizebox{\textwidth}{!}{
\begin{tabular}{||l|l||l|l|l|l|l|l||l||}

\hhline{--------} 
\cline{3-8}
\multicolumn{2}{||c||}{ } & ACC & HGR\_NN & HGR\_KDE & HGR\_RDC & MINE  & FairQuant  \\ 
\hline
\multirow{7}{*}{\rotatebox[origin=c]{90}{COMPAS}}
& Standard NN &  68.7\% $\pm$ 0.243  & 0.363 $\pm$ 0.005 & 0.326  $\pm$ 0.003  &   0.325 $\pm$ 0.008  & 0.046 $\pm$ 0.028 &  0.140 $\pm$ 0.001 \\ 

& Grari et al. \cite{grari2019fairness} &  59.7\% $\pm$ 2.943  & 0.147 $\pm$ 0.000 & 0.121  $\pm$  0.002 &   0.101 $\pm$ 0.007 & 0.004 $\pm$ 0.001  &  0.018 $\pm$ 0.018    \\ 
& Fair MINE NN & 54.4\% $\pm$  7.921 &  0.134 $\pm$ 0.145 &  0.123  $\pm$ 0.111   & 0.141 $\pm$ 0.098 & 0.014  $\pm$  0.023 & 0.038  $\pm$  0.050   \\ 
& Adel \cite{adel2019one}&  55.4\%  $\pm$ 0.603 & 0.118 $\pm$ 0.022  &  0.091 $\pm$  0.012   &    0.097 $\pm$ 0.034 & 0.006 $\pm$ 0.007 & 0.013 $\pm$ 0.016    \\ 

&  Zhang et al. \cite{zhang2018mitigating} &  51.0\% $\pm$  3.550 & 0.116 $\pm$ 0.000  & 0.081  $\pm$ 0.003  &  0.086 $\pm$ 0.010 & 0.002 $\pm$ 0.003 & \textbf{0.010}  $\pm$ 0.005     \\
& Madras et al.    \cite{pmlr-v80-madras18a}  &  54.9\% $\pm$  2.221 & 0.175   $\pm$ 0.000   &   0.116 $\pm$ 0.015  &  0.107 $\pm$ 0.026 & 0.011 $\pm$ 0.020 &  0.005 $\pm$ 0.003    \\ 

& Sadeghi et al. \cite{sadeghi2019global} &  54.3\% $\pm$ 0.024 & 0.194 $\pm$  0.052 &  0.237 $\pm$ 0.040 & 0.264 $\pm$ 0.054  &   0.003 $\pm$ 0.003   & 0.003  $\pm$   0.003 \\
& Ours &  \textbf{60.2}\% $\pm$ 3.076  & \textbf{0.063} $\pm$ 0.024 & \textbf{0.068}  $\pm$ 0.018   &  \textbf{0.067} $\pm$ 0.014 & \textbf{0.001} $\pm$ 0.002 & 0.011 $\pm$ 0.018   \\ 
\hline
\multirow{6}{*}{\rotatebox[origin=c]{90}{Default}}
 & Standard NN &  82.1\% $\pm$  0.172 & 0.112 $\pm$ 0.013 &   0.067 $\pm$ 0.010    & 0.089  $\pm$ 0.014 & 0.002 $\pm$ 0.001 &  0.015 $\pm$ 0.002    \\

& Grari et al. \cite{grari2019fairness}&  79.9\% $\pm$ 2.100  & 0.082 $\pm$ 0.015  &  0.075 $\pm$ 0.019   &    0.072 $\pm$ 0.010  & 0.001 $\pm$  0.001 &   0.007 $\pm$   0.007  \\ 

& Adel \cite{adel2019one}  &   79.2\% $\pm$ 1.207  & 0.054 $\pm$ 0.025 & 0.048  $\pm$ 0.015  &  0.064  $\pm$ 0.009  & 0.001 $\pm$ 0.001 &  0.005 $\pm$ 0.002    \\

& Fair MINE NN &  80.1\% $\pm$ 2.184   &  0.093 $\pm$ 0.020 &   0.057 $\pm$ 0.002   &  0.066 $\pm$ 0.012  & 0.001 $\pm$ 0.001 &  0.008 $\pm$ 0.001    \\ 


& Zhang \cite{zhang2018mitigating} &   77.9\% $\pm$  9.822 & 0.052 $\pm$ 0.017 &   \textbf{0.044} $\pm$  0.013   &  0.056  $\pm$ 0.004 & \textbf{0.000} $\pm$ 0.000 & \textbf{0.004}  $\pm$ 0.000    \\ 
& Madras et al.    \cite{pmlr-v80-madras18a}  &  78.3\% $\pm$  0.605 &   0.064 $\pm$ 0.025   &  0.052 $\pm$ 0.018  &  0.061 $\pm$ 0.012 & 0.001 $\pm$ 0.001 & 0.003   $\pm$   0.005  \\ 

& Sadeghi et al. \cite{sadeghi2019global} &  79.7\% $\pm$ 0.236 & 0.074 $\pm$ 	0.019 &  0.062 $\pm$  0.013 & 0.098 $\pm$  0.041 &   0.002 $\pm$ 0.002   & 0.003  $\pm$ 0.002  \\ 				
& Ours & \textbf{80.8}\%  $\pm$ 0.286  &  \textbf{0.041} $\pm$ 0.008 & \textbf{0.044}  $\pm$ 0.006   &   \textbf{0.047} $\pm$ 0.002  & 0.001 $\pm$ 0.002 &  0.005 $\pm$ 0.001    \\

\hline

\hhline{========} 
\end{tabular}
}
\centering
\\
\caption{Experimental results - 
Best performance among fair algorithms in bold.}
\end{table*}


\section{Conclusion}
\label{sec:conclusion}
We present a new adversarial learning approach to produce fair representations with a continuous sensitive attribute. We leverage the HGR coefficient, which is efficient in capturing non-linear dependencies, and propose 
to minimize a neural estimation of the HGR correlation between the latent space representation and the sensitive attributes. 
This method proved to be very efficient for different fairness metrics on various artificial and real-world data sets. For further investigation, we will apply this architecture for 
information bottleneck purposes (e.g., for data privacy), which might be improved with an HGR\_NN penalization as suggested in \cite{asoodeh2015maximal}.


\section{Broader Impact}
Machine learning models are playing an increasingly important role in decision making and the impacts can be dramatic. For example, in the case of banks and insurance companies, these algorithms are applied to establish credit approval or fraud detection and more recently, these models have been applied to generate predictions for criminal recidivism with COMPAS in the United States. 
The stakes are therefore major for citizens, and we must understand and master them. 
The standard machine learning models only 
optimize accuracy and are prone to learn all the relevant information for the task whether they are sensitive or not. Many incidents of discrimination on the basis of gender, race or others have been documented these recent years. 
This present work should therefore be positive for the well-being of society, for citizens on the one hand to avoid discrimination and for companies and institutions to allow for better control of their algorithms. More precisely this present work has two significant advantages, as it provides first an assessment of the sensitive bias in the latent representation of the classical deep learning model or in the prediction itself and second it makes its mitigation possible. 
The regulation law for data privacy and fairness in machine learning 
has evolved significantly over the past years in many countries and is likely to be more restrictive in the coming years. Nevertheless, applying 
fairness algorithms often decreases the accuracy of the model and would therefore be a sacrifice for companies in terms of business. One other disadvantage of this work which requires further investigation is to assess the impact of the sensitive bias mitigation on an 
individual level, because it may 
induce negative impacts 
for some individuals. For example, in an extreme case, a person may be refused a position only because of belonging to a privileged group, regardless of their merit within the group. 





\bibliography{mainArXiv}
\bibliographystyle{abbrv}

\newpage
\section{Appendix}
\subsection{Fair adversarial architectures}
\label{sec:Fair_adversarial_architectures}

We illustrate in Figure \ref{fig:Arch_Adv} the various fair adversarial neural network architectures existing in the literature (including this paper). We distinguish two fair adversarial families:
\begin{itemize}
	\item Fair representation: The mitigation is carried on an intermediary latent variable $Z$. The multidimensional latent variable is fed to the adversary and to the predictor.
	\item Prediction retreatment: The mitigation is carried on the prediction itself. The prediction is fed to the adversary.
\end{itemize}
For these two families, we distinguish 3 subfamilies:
\begin{itemize}
	\item Simple Adversarial: The adversary tries to predict the sensitive attribute. The bias is mitigated by fooling this adversary.
	\item Rényi Adversarial: The adversary tries to find adequate non-linear transformations for the estimation of the HGR coefficient. The bias is mitigated via the minimization of this estimation.
	\item MINE Adversarial: The Mutual Information Neural Estimator \cite{belghazi2018mutual}, which relies on the Donsker-Varadhan representation \cite{donsker1983asymptotic} of the Kullback-Leibler divergence, is used as an adversary. The bias is mitigated via the minimization of the mutual information estimation.
\end{itemize}

\begin{figure*}[h!]
  \centerfloat
  \includegraphics[scale=0.65]{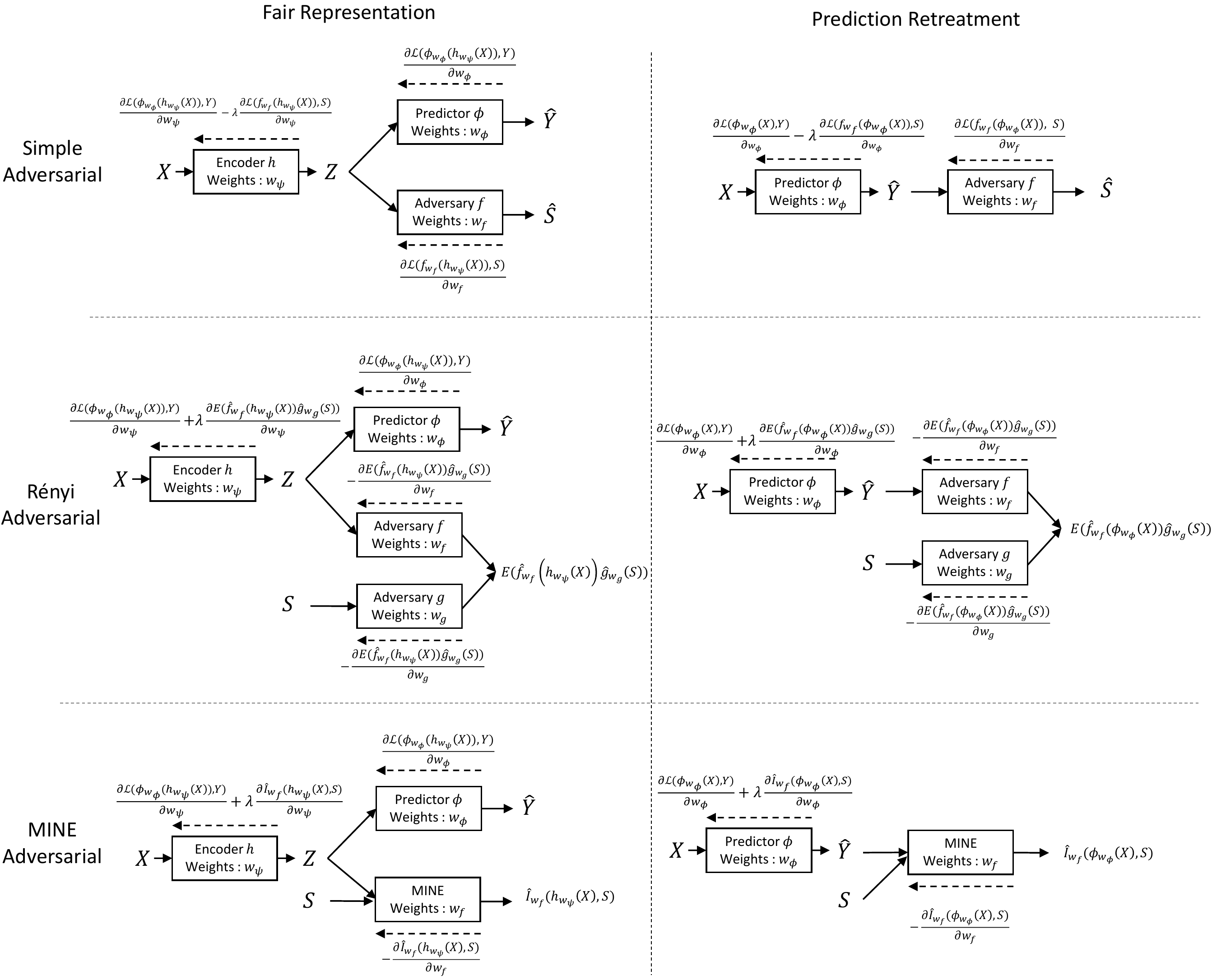}
  \caption{Fair adversarial architectures} 
  \label{fig:Arch_Adv}
\end{figure*}

We represent, in Figure \ref{alg:fair_hgr}, the 6 different fair combinations. Zhang et al. \cite{zhang2018mitigating} feed the prediction output as input to an adversary network that tries to predict the sensitive attribute (upper right). Adel et al. \cite{adel2019one} learn a fair representation by inputting it to the adversary, which is prevented from predicting the sensitive attribute (upper left). Grari et al. \cite{grari2019fairness} minimize the HGR correlation between the prediction output and the sensitive attribute (middle right). Ragonesi et al. \cite{ragonesi2020learning} rely on MINE to minimize the mutual information between a representation and the sensitive attribute (lower left). Another approach, referred to as Fair MINE NN, minimizes the mutual information between the prediction output and the sensitive attribute (lower right). Finally, our algorithm consists in learning a fair representation by minimizing its HGR correlation with the sensitive attribute (middle left). 

\subsection{Proofs}
\label{sec:proofs}
\subsubsection{Consistency of the HGR\_NN}
The domains $\mathcal{U}$ and $\mathcal{V}$ of the random variables $U$ and $V$ are assumed to be compact. \

We define the theoretical HGR as follows:
\begin{eqnarray}
HGR(U, V) = \sup_{\substack{ f:\mathcal{U}\rightarrow \mathbb{R},g:\mathcal{V}\rightarrow \mathbb{R}}}\rho(f(U),g(V))
\end{eqnarray}

where $\rho$ is the Pearson’s correlation coefficient and $f$, $g$ are (measurable) functions with finite and positive variance w.r.t the distributions of $U$ and $V$.

We define the theoretical neural HGR measure associated to a family of neural networks $F_\Theta$:
\begin{equation}
    HGR_{F_\Theta}(U,V) = \sup_{(f_{\theta_f},g_{\theta_g}) \in F_\Theta}  \rho(f_{\theta_{f}}(U),g_{\theta_{g}}(V)) 
\label{opthgr}
\end{equation}
$\Theta$ is a compact domain of $\mathbb{R}^k$ for a given $k$. \\ $F_\Theta \subset \{(f_{\theta_{f}},g_{\theta_{g}}), f_{\theta_{f}}$ and $g_{\theta_{f}}$ neural networks with parameters ($\theta_{f}$,$\theta_{g}$) $\in$ $\Theta$\}. \\ 
We use the abuse of notation $HGR_{\Theta}(U,V)$ to refer to $HGR_{F_\Theta}(U,V)$. $HGR_{\Theta}(U,V)$ is well-defined when $f_{\theta_{f}}(U)$ and  $g_{\theta_{g}}(V)$ are not constant for all ($\theta_{f}$,$\theta_{g}$) $\in$ $\Theta$. \\

We define the empirical HGR neural measure, given $n$ \textit{i.i.d} samples of $(U,V)$ and a family $F_\Theta$, as:
\begin{equation}
    \widehat{HGR(U,V)_n} = \sup_{(\theta_f,\theta_g) \in \Theta}  \rho_n(f_{\theta_{f}}(U),g_{\theta_{g}}(V)) 
\label{emphgr}
\end{equation}

where $\rho_n$ is the sample correlation computed using the samples of $(U,V)$. $\rho_n$ is well-defined \textit{iff} the sample variances are positive.

\begin{lemma}{(approximation)}
\label{approximation}
Let $\eta > 0$. There exists a family of continuous neural networks $F_{\Theta}$ parametrized by a compact domain $\Theta \subset R^{k}$, such that $HGR_{\Theta}(U,V)$ is well-defined and:
\begin{equation}
 \abs{HGR(U,V)-HGR_{\Theta}(U,V)} \leq \eta.
\end{equation}

\end{lemma}
\begin{proof}
Let $\eta>0$ and $\epsilon>0$.
\\
\\
There exist functions $f^{*},g^{*}$ centered and standardized such that: 

\begin{center}
    $HGR(U,V)-\rho(f^{*}(U),g^{*}(V)) < \epsilon$ \\
\end{center}

Let $f$ and $g$ some functions with positive and finite variance and $\Tilde{f}=\frac{(f-\mu_f)}{\sigma_f}$
; $\Tilde{g}=\frac{(g-\mu_g)}{\sigma_g}$, so that $\Tilde{f}(U)$ and $\Tilde{g}(V)$ are centered and standardized.

\begin{subequations}
	\begin{align}
  HGR(U,V)-\rho(f(U),g(V)) &\leq \epsilon + \rho(f^{\ast}(U),g^{\ast}(V)) - \rho(\Tilde{f}(U),\Tilde{g}(V)) \\
 &= \epsilon + E(f^{\ast}(U)g^{\ast}(V)) - E(\Tilde{f}(U)\Tilde{g}(V)) 
	\end{align}
\end{subequations}

Using the Cauchy-Schwarz inequality:
\begin{subequations}
	\begin{align}
  E(f^{\ast}(U)g^{\ast}(V)) - E(\Tilde{f}(U)\Tilde{g}(V)) 
  &= E \left( (f^{\ast}(U)-\Tilde{f}(U))g^{\ast}(V) \right)+E \left((g^{\ast}(V)-\Tilde{g}(V))\Tilde{f}(U)\right) \\
  &\leq \sqrt{E \left((f^{\ast}(U)-\Tilde{f}(U))^2 \right)} + \sqrt{E \Big( (g^{\ast}(V)-\Tilde{g}(V))^2 \Big) }
	\end{align}
\end{subequations}

Let $||h||_{2}=E(h(X)^2)^{1/2}$ with $X \sim U$ or $X \sim V$ depending on the context. The inequality becomes:

\begin{subequations}
	\begin{align}
  HGR(U,V)-\rho(f(U),g(V)) &\leq \epsilon + ||f^{\ast}-\Tilde{f}||_{2} +  ||g^{\ast}-\Tilde{g}||_{2}
	\end{align}
\end{subequations}

Let's find a bound of $||f^{\ast}-\Tilde{f}||_2$ that depends on $||f^{\ast}-f||_2$:
\begin{subequations}
	\begin{align}
  ||f^{\ast}-\Tilde{f}||^2_{2} &= 2-2E\left(f^{\ast}(U)\left(\frac{f(U)-\mu_{f}}{\sigma_f}\right)\right)\\
  &= 2-2\frac{E\left(f^{\ast}(U)f(U)\right)}{\sigma_f}\\
  &= 2+\frac{1}{\sigma_f}E\left((f^{\ast}(U)-f(U))^2-1-\sigma^2_f-\mu^2_f\right)\\
  &\leq 2+\frac{1}{\sigma_f}(||f^{\ast}-f||^2_2-1-\sigma_f^2)\\
  &=\frac{||f^{\ast}-f||^2_2}{\sigma_f}+2-(\frac{1}{\sigma_f}+\sigma_f) \label{bound_eq}
	\end{align}
\end{subequations}

We bound the standard deviation error: \\  
\begin{subequations}
	\begin{align}
    |1-\sigma_f| &\leq \sqrt{|1-E(f(U)^2)|} + |E(f(U))| \\
              &= \sqrt{\Big|E\Big((f^{\ast}(U)-f(U))(f^{\ast}(U)+f(U))\Big)\Big|} + \big|E\big(f(U)-f^{\ast}(U)\big)\big| \\
              &\leq \sqrt{||f^{\ast}-f||_2||f^{\ast}+f||_2} + ||f^{\ast}-f||_2  \\
              &\leq \sqrt{||f^{\ast}-f||_2(||f^{\ast}-f||_2+2||f^{\ast}||_2)} + ||f^{\ast}-f||_2 \\
              &= \sqrt{||f^{\ast}-f||^2_2+2||f^{\ast}-f||_2} + ||f^{\ast}-f||_2 \label{std error}
	\end{align}
\end{subequations}



Using \eqref{std error}, we have:

\begin{subequations}
	\begin{align}
  \frac{||f^{\ast}-f||^2_2}{\sigma_f} \leq 
  \frac{||f^{\ast}-f||^2_2}{1-|1-\sigma_f|} \leq \frac{||f^{\ast}-f||^2_2}{1-(\sqrt{||f^{\ast}-f||_2^2+2||f^{\ast}-f||_2}+||f^{\ast}-f||_2)} 
	\end{align}
\end{subequations}

Combining this with \eqref{bound_eq}:
\begin{subequations}
	\begin{align}
  ||f^{\ast}-\Tilde{f}||^2_2 \leq \frac{||f^{\ast}-f||^2_2}{1-(\sqrt{||f^{\ast}-f||_2^2+2||f^{\ast}-f||_2}+||f^{\ast}-f||_2)}+2 - (\frac{1}{\sigma_f}+\sigma_f)
	\end{align}
\end{subequations}

\begin{subequations}
	\begin{align}
  t:x\rightarrow \frac{x^2}{1-(\sqrt{x^2+2x}+x)}
	\end{align}
\end{subequations}
is continuous at $0$ so there exists $\gamma_1>0$ such that $|x| \leq \gamma_1$ $\Rightarrow$  $t(x)\leq \frac{\eta^2}{8}$

\begin{subequations}
	\begin{align}
  r:x\rightarrow 2-(\frac{1}{x}+x)
	\end{align}
\end{subequations}
is continuous at $1$ so there exists $\gamma_2>0$ such that $|x-1| \leq \gamma_2$ $\Rightarrow$  $r(x)\leq \frac{\eta^2}{8}$

\begin{subequations}
	\begin{align}
  s:x\rightarrow \sqrt{x^2+2x}+x
	\end{align}
\end{subequations}
is continuous at $0$ so there exists $\gamma_3>0$ such that $|x| \leq \gamma_3$ $\Rightarrow$  $|s(x)|\leq \min(\gamma_2,\frac{1}{2})$ \\

By the universal approximation theorem (see corollary 2.2 of \cite{hornik1989multilayer}) and knowing that $U$ is bounded, we may choose a continuous feedforward network function $f_{\theta_{f}}$ such that:
\begin{center}
    $||f^{\ast}-f_{\theta_{f}}||_2 \leq \min(\gamma_1,\gamma_3)$
\end{center}

By construction of $\gamma_1$ and $\gamma_3$, $f_{\theta_{f}}$ has positive variance and: 
$||f^{\ast}-\Tilde{f}_{\theta_{f}}||_2 \leq \sqrt{\frac{\eta^2}{8}+\frac{\eta^2}{8}}=\frac{\eta}{2}$

Similarly, we can choose a continuous feed-forward network function $g_{\theta_{g}}$ such that:
$||g^{\ast}-\Tilde{g}_{\theta_{g}}||_2 \leq \frac{\eta}{2}$ \\

Therefore:
\begin{center}
    $HGR(U,V)-\rho(f_{\theta_f}(U),g_{\theta_g}(V)) \leq \epsilon + \eta$
\end{center}
Taking the limit as $\epsilon$ approaches 0:
\begin{center}
    $HGR(U,V)-\rho(f_{\theta_f}(U),g_{\theta_g}(V)) \leq  \eta$
\end{center}

For $\Theta$ a given subset of $\mathbb{R}^k$ with $k$ the number of coordinates in $(\theta_f,\theta_g)$, we denote as $F_{\Theta}$ the family of neural networks with the same architecture as $(f_{\theta_{f}}, g_{\theta_{g}})$, parametrized by $\Theta$. \\ 

We can find a compact set $\Theta$ containing $(\theta_f,\theta_g)$ such that all the elements of $F_{\Theta}$ have positive and finite variance: while the finitude of the variance is due to the boundedness of $U$, $V$ and the continuity of the neural networks w.r.t the input, the positivity can be obtained by using the argument of the continuity of the variance w.r.t the parameters (due to the boundedness of $U$, $V$ and the continuity of the neural networks w.r.t the parameters).

Choosing such a compact set $\Theta$, we obtain the result:

\begin{equation}
 \abs{HGR(U,V)-HGR_{\Theta}(U,V)} \leq \eta.
\end{equation}

\end{proof}

\begin{lemma}{(estimation)}
\label{estimation}
Let $\eta > 0$, and $F_{\Theta}$  a family of continuous neural networks  parametrized by a compact domain $\Theta \subset R^{k}$. There exists an $N \in \mathbb{N}$ such that:
\begin{equation}
    \forall n \geq N,  
    \abs{\widehat{HGR(U,V)_n}-HGR_\Theta(U,V)} \leq \eta, a.s.
\end{equation}

\end{lemma}

\begin{proof}
To simplify notations, we will note $f$ and $g$ for $f(U)$ and $g(V)$ when there is no ambiguity. \\
Let $\eta > 0$. By triangular inequality:

\begin{equation}
    \abs{\widehat{HGR(U,V)_n}-HGR_\Theta(U,V)} \leq \sup_{(\theta_f,\theta_g) \in \Theta}  \abs[\big]{\rho_n(f_{\theta_{f}},g_{\theta_{g}}) - \rho(f_{\theta_{f}},g_{\theta_{g}})}
    \label{tri_ineq}
\end{equation}

We denote $E_n$ the empirical expectation, so that: 

\begin{equation}
    \rho_n(X,Y) = \frac{E_n(XY) - E_n(X)E_n(Y)}{ \sqrt{E_n(X^2) - E_n(X)^2} \sqrt{E_n(Y^2) - E_n(Y)^2}}
\end{equation}

The function $(\theta_{f},\theta_{g},u,v) \rightarrow (f_{\theta_f}(u),g_{\theta_g}(v))$ is continuous on a compact set, so it is bounded. The neural networks are, therefore, uniformly bounded. The compactness of $\Theta$, along with the uniform boundedness argument and the continuity of the neural networks w.r.t their parameters, allows to use the uniform law of large numbers \cite{geer2000empirical} to obtain the almost sure uniform convergence of all empirical expectations in $\rho_n$, to the corresponding expectations. \\

The almost sure uniform convergence is compatible with addition, subtraction, multiplication and division, so long as some hypotheses are verified. The compatibility with the first three operations can easily be demonstrated. As for division, we rely on the fact that we can find a uniform positive lower bound for $Var(f_{\theta_f})$ and $Var(g_{\theta_g})$. Indeed, these are positive and continuous functions w.r.t $\theta_f$ (\textit{resp.} $\theta_g$) on a compact set. We can note that this uniform positive lower-bound for the variances, combined with the almost sure uniform convergence of the sample variances, allows us to state that, eventually, all sample variances are positive. \\

We deduce, by compatibility of operations with almost sure uniform convergence, the almost sure uniform convergence of $\rho_n(f_{\theta_{f}},g_{\theta_{g}})$ to $\rho(f_{\theta_{f}},g_{\theta_{g}})$. \\

Therefore, by combining the previous result with \eqref{tri_ineq}, we can find $N \in \mathbb{N}$ such that:

\begin{equation}
    \forall n \geq N,  
    \abs{\widehat{HGR(U,V)_n}-HGR_\Theta(U,V)} \leq \eta, a.s.
\end{equation}

\end{proof}

\begin{theorem}
$\widehat{HGR(U,V)_n}$ is strongly consistent.
\end{theorem}

\begin{proof}
This is a direct consequence of Lemma \ref{approximation} combined with Lemma \ref{estimation}.
\end{proof}

\subsubsection{Comparison with simple adversarial algorithms}

\begin{theorem}
If $E(Y|X)$ is constant, then $\sup_{f} \rho(f(X),Y) = 0$. Else,  $f^{*} \in \argmax_{f} \rho(f(X),Y) $ iff there exists $a,b \in \mathbb{R} $, with $a > 0$, such that:
 \begin{align}
     f^{*}(X) = a E(Y|X) + b
 \end{align}
\end{theorem}

\begin{proof}
Let $f$ a function with positive and finite variance w.r.t $X$.
\begin{subequations}
    \begin{align}
        \mathrm{Cov}(f(X),Y) &= E(f(X)Y) - E(f(X))E(Y) \\
                     &= E\Big(E(f(X)Y|X)\Big) - E(f(X))E(E(Y|X)) \\
                     &= E\Big(f(X)E(Y|X)\Big) - E(f(X))E(E(Y|X)) \\
                     &= \mathrm{Cov}(f(X),E(Y|X))
    \end{align}
\end{subequations}
If $E(Y|X)$ is constant, $\mathrm{Cov}(f(X),Y) = 0$ and therefore $\rho(f(X),Y) = 0$, so that $\sup_{f} \rho(f(X),Y) = 0$. Else, by the Cauchy-Schwarz inequality: 

\begin{subequations}
    \begin{align}
        \rho(f(X),Y) &= \frac{\mathrm{Cov}(f(X),Y)}{\sigma_{f(X)}\sigma_Y} \\
                     &= \frac{\mathrm{Cov}(f(X),E(Y|X))}{\sigma_{f(X)}\sigma_Y} \\
                     &\leq \frac{\sigma_{E(Y|X)}}{\sigma_Y} \label{c-s_inequality} \\
                     &= \rho(E(Y|X),Y)
    \end{align}
\end{subequations}

The inequality above shows that any linear transformation of $E(Y|X)$ with positive slope maximizes $\rho(f(X),Y)$. Conversely, for $f^{*} \in \argmax_{f} \rho(f(X),Y)$, \ref{c-s_inequality} is an equality, which gives $\rho(f^{*}(X),E(Y|X)) = 1$. This implies that there exists $a,b \in \mathbb{R} $, with $a > 0$, such that $f^{*}(X) = a E(Y|X) + b$.
\end{proof}

Note that a one-dimensional linear regression with $f^{*}(X)$ as input and $Y$ as output allows to find $E(Y|X)$.
\begin{proposition}{}
\label{conditional_expectation}
Given $Y \sim \mathcal{N}(\mu, \sigma^{2})$, $X = \arctan(Y^{2}) + U\pi $, where  $U \perp Y$  and $U$ follows a Bernoulli distribution with  $p = \frac{1}{2}$, we have: 
\begin{center}
     $E(Y|X) = \tanh \Big(\frac{\mu}{\sigma^2}\sqrt{\tan(X)}\Big)\sqrt{\tan(X)}$
\end{center}
\end{proposition}

\begin{proof}
We have $Y^2 = \tan(X)$, so that:
\begin{align}
     Y = (2 \mathds{1}_{\{Y>0\}} - 1)\sqrt{\tan(X)} 
     \label{equality_indic}
\end{align}
so it is sufficient to compute $E(\mathds{1}_{\{Y>0\}}|X)$:
\begin{subequations}
\begin{align}
    E(\mathds{1}_{\{Y>0\}}|X) &= E\Big(E(\mathds{1}_{\{Y>0\}}|X,U)\Big|X\Big) \\
                              &= E\Big(E(\mathds{1}_{\{Y>0\}}|\tan(X),U)\Big|X\Big) \\
                              &= E\Big(E(\mathds{1}_{\{Y>0\}}|\tan(X))\Big|X\Big) \\
                              &= E(\mathds{1}_{\{Y>0\}}|\tan(X)) \\
                              &= E(\mathds{1}_{\{Y>0\}}|Y^2)
\end{align} 
\end{subequations}

Let $y > 0$ and $0 < \epsilon < y$:
\begin{subequations}
\begin{align}
    E(\mathds{1}_{\{Y>0\}}||Y^2-y|< \epsilon) 
    &= \frac{\mathbb{P}(Y >0, \sqrt{y-\epsilon} < Y < \sqrt{y+\epsilon})}{\mathbb{P}( \sqrt{y-\epsilon} < Y < \sqrt{y+\epsilon}) + \mathbb{P}( -\sqrt{y+\epsilon} < Y < -\sqrt{y-\epsilon})} \\
    &= \frac{\displaystyle \int_{\sqrt{y-\epsilon}}^{\sqrt{y+\epsilon}} P_Y(u) du}{\displaystyle \int_{\sqrt{y-\epsilon}}^{\sqrt{y+\epsilon}} P_Y(u) du + \int_{-\sqrt{y+\epsilon}}^{-\sqrt{y-\epsilon}} P_Y(u) du} \\
    &= \frac{\displaystyle \int_{\sqrt{y-\epsilon}}^{\sqrt{y+\epsilon}} P_Y(u) du}{\displaystyle \int_{\sqrt{y-\epsilon}}^{\sqrt{y+\epsilon}} (P_Y(u) + P_Y(-u)) du} \\
    &\xrightarrow[\epsilon \to 0]{} \frac{P_Y(\sqrt{y})}{P_Y(\sqrt{y}) + P_Y(-\sqrt{y})}
\end{align} 
\end{subequations}

Therefore, knowing that $P_Y(y) = \frac{e^{-\frac{1}{2}(\frac{y-\mu}{\sigma})^2}}{\sqrt{2\pi}}$ we have:
\begin{subequations}
\begin{align}
    2E(\mathds{1}_{\{Y>0\}}|X) - 1 &= \frac{P_Y(|Y|)-P_Y(-|Y|)}{P_Y(|Y|)+P_Y(-|Y|)} \\
                                   &= \tanh \Big(\frac{\mu}{\sigma^2}|Y|\Big) \\
                                   &= \tanh \Big(\frac{\mu}{\sigma^2}\sqrt{\tan(X)}\Big)
    \label{conditional_exp_indic}
\end{align}
\end{subequations}

Taking the conditional expectation in \ref{equality_indic} and plugging in \ref{conditional_exp_indic}, we obtain:
\begin{center}
     $E(Y|X) = \tanh \Big(\frac{\mu}{\sigma^2}\sqrt{\tan(X)}\Big)\sqrt{\tan(X)}$
\end{center}
\end{proof}
\begin{proposition}
\label{bounds_correlation}
With the same hypotheses as in proposition \ref{conditional_expectation}, and denoting $\alpha = \frac{\mu}{\sigma}$, we have:
\begin{center}
    $\sqrt{1-e^{-\frac{\alpha^2}{2}}}\leq \rho(E(Y|X),Y) \leq \sqrt{1-e^{-\frac{\alpha^2}{2}}(1+\alpha^2)^{-\frac{3}{2}}}$
\end{center}
\end{proposition}
\begin{proof}
We first note that, knowing that $|Y| = \sqrt{\tan(X)}$ and with a parity argument:
\begin{align}
    E(Y|X) = \tanh \Big(\frac{\mu}{\sigma^2}Y \Big)Y
\end{align}
We have:
\begin{subequations}
\begin{align}
    \rho(E(Y|X),Y)^2 &= \frac{\mathrm{Cov}(Y,E(Y|X))}{\sigma^2} \\
                     &= \frac{\mathrm{Cov}(Y,Y)-\mathrm{Cov}(Y,Y - E(Y|X))}{\sigma^2} \\
                     &= 1 - E\left(\left(\frac{Y}{\sigma}\right)^2\Big(1-\tanh\Big(\frac{\mu}{\sigma^2}Y\Big)\Big)\right)
                     \label{rho_expression}
\end{align}
\end{subequations}
With a variable change ($y = \frac{y'}{\sigma}$), we obtain: 
\begin{subequations}
\begin{align}
    E\left(\left(\frac{Y}{\sigma}\right)^2\Big(1-\tanh\Big(\frac{\mu}{\sigma^2}Y\Big)\Big)\right)
    &=  \frac{1}{\sqrt{2\pi}} \int_{\mathbb{R}} \frac{2y^2e^{-\alpha y}}{e^{\alpha y}+ e^{-\alpha y}} e^{-\frac{1}{2}(y-\alpha)^2}dy \\
    &= e^{-\frac{\alpha^2}{2}} \times \frac{1}{\sqrt{2\pi}} \int_{\mathbb{R}} \frac{y^2}{\cosh(\alpha y)} e^{-\frac{1}{2}y^2} dy
    \label{expectation_integral}
\end{align}
\end{subequations}
We have, for all $y \in \mathbb{R}$, $1 \leq \cosh(\alpha y) \leq e^{\frac{\alpha^2 y^2}{2}}$. This gives:
\begin{align}
    \frac{1}{\sqrt{2\pi}} \int_{\mathbb{R}} y^2 e^{-\frac{1}{2}y^2} dy 
    \leq \frac{1}{\sqrt{2\pi}} \int_{\mathbb{R}} \frac{y^2}{\cosh(\alpha y)} e^{-\frac{1}{2}y^2} dy 
    \leq \frac{1}{\sqrt{2\pi}} \int_{\mathbb{R}} y^2 e^{-\frac{1}{2}(1+\alpha^2)y^2} dy
\end{align}
i.e 
\begin{align}
    1 \leq \frac{1}{\sqrt{2\pi}} \int_{\mathbb{R}} \frac{y^2}{\cosh(\alpha y)} e^{-\frac{1}{2}y^2} dy \leq (1+\alpha^2)^{-\frac{3}{2}}
    \label{integral_ineq}
\end{align}

We combine \ref{rho_expression}, \ref{expectation_integral} and \ref{integral_ineq} to obtain the result:
\begin{center}
    $\sqrt{1-e^{-\frac{\alpha^2}{2}}}\leq \rho(E(Y|X),Y) \leq \sqrt{1-e^{-\frac{\alpha^2}{2}}(1+\alpha^2)^{-\frac{3}{2}}}$
\end{center}
\end{proof}
\vspace{-0.5cm}
\begin{figure*}[h!]
  \centerfloat
  \includegraphics[scale=0.6]{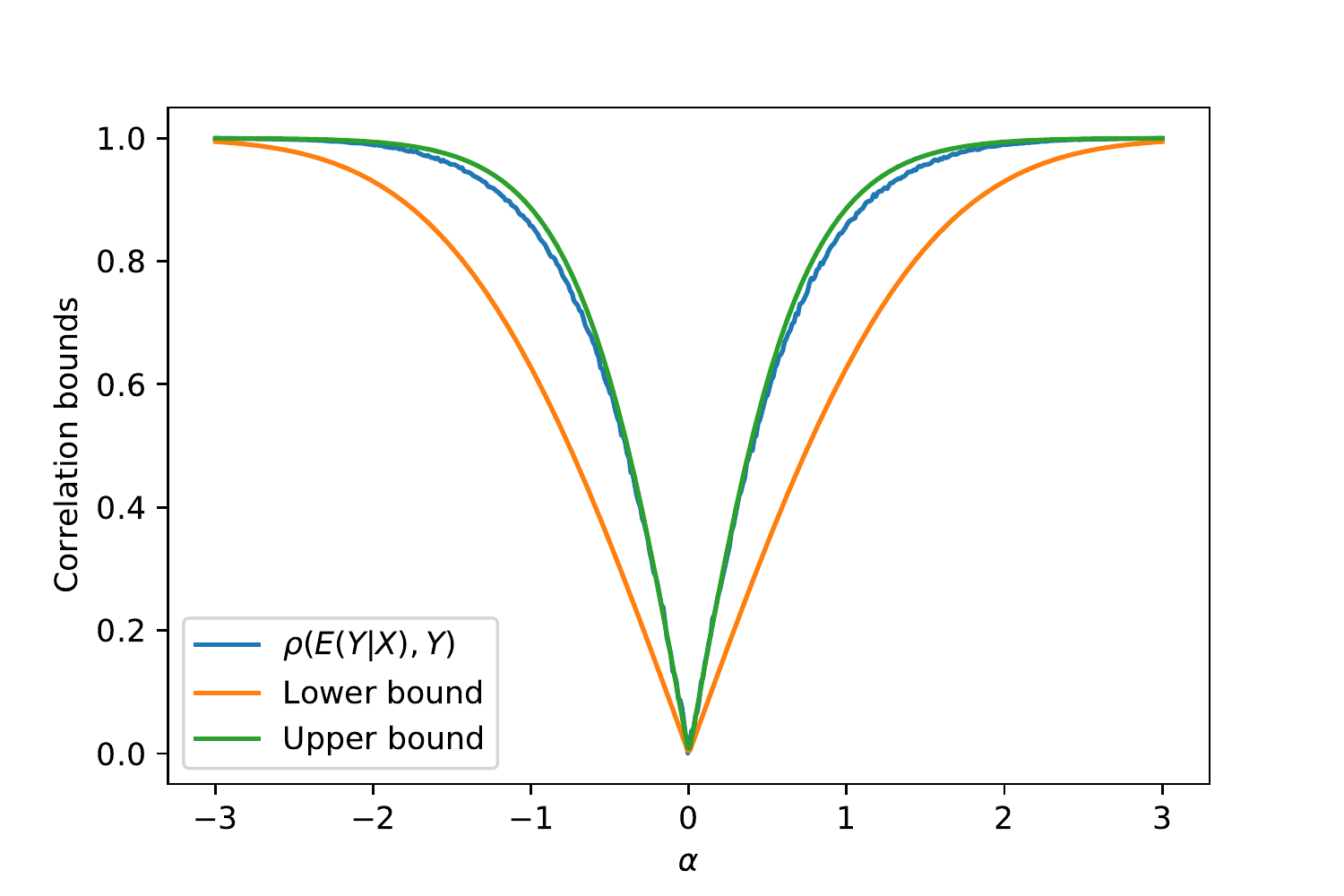}
  \caption{Simplified HGR correlation w.r.t $\alpha$} 
  \label{fig:correlation_bounds}
\end{figure*}

In Figure \ref{fig:correlation_bounds}, we illustrate the bounds found in proposition \ref{bounds_correlation}, $\rho(E(Y|X),Y)$ being estimated by Monte-Carlo. First, we note that the upper bound is close to $\rho(E(Y|X),Y)$, whereas the lower bound $\sqrt{1-e^{-\frac{\alpha^2}{2}}}$ is not as precise. For non-zero values of $\alpha$, $\rho(E(Y|X),Y)$ is positive, so that a predictive neural network can capture some non-linear dependencies between $Y$ and $X$. This is due to the fact that, for $\alpha \neq 0$, the square function is bijective when restricted to some open interval containing the mean of $Y$, whereas when $\alpha = 0$, such an interval cannot be found. When this interval is large and the standard deviation of $Y$ is not too large (which corresponds to high values of $|\alpha|$), $\rho(E(Y|X),Y)$ approaches 1 and the $Y$ prediction error approaches 0. In the opposite case, $\rho(E(Y|X),Y)$ is close to 0 and a predictive neural network cannot capture dependencies. 
\\
\begin{proposition}
We consider the global fairness objective of the prediction retreatment simple adversarial algorithm, with $X$ the input data, $Y$ the output data and $S$ the sensitive attribute (with $\widehat{Y} = f(X)$):
\begin{align}
    \max_{f} \min_{g} E\Big(\big(S-g(f(X))\big)^2\Big)
\end{align}
whose 
optimum is achieved when $E(S \vert \widehat{Y})=E(S)$, different from the  demographic parity fairness objective $P(S \vert \widehat{Y})=P(S)$ for continuous features.
\end{proposition}

\begin{proof}
We have:
\begin{center}
    $\max_{f} \min_{g} E\Big(\big(S-g(f(X))\big)^2\Big) = \max_{f} E\Big(\big(S-E(S|f(X))\big)^2\Big)$
\end{center}

Some algebraic manipulations with expectations give:
\begin{subequations}
\begin{align}
   E\Big(\big(S-E(S|\widehat{Y})\big)^2\Big) &= E(S^2) - 2E\Big(SE(S|\widehat{Y})\Big) + E(E(S|\widehat{Y})^2) \\
                      &= E(S^2) - 2E\Big(E\big(SE(S|\widehat{Y})\big|\widehat{Y}\big)\Big) + E(E(S|\widehat{Y})^2) \\
                      &= E(S^2) - E(E(S|\widehat{Y})^2) \\
                      &= (E(S^2) - E(S)^2) - \Big(E(E(S|\widehat{Y})^2) - E(E(S|\widehat{Y}))^2\Big) \\
                      &= \sigma_{S}^2 - \sigma_{E(S|\widehat{Y})}^2
\end{align}

Therefore, the global fairness objective is equivalent to 
\begin{center}
    $\min_{f} \sigma_{E(S|f(X))}^2$
\end{center}
In the optimal case, we have $\sigma_{E(S|\widehat{Y})} = 0$, which corresponds to the case when $E(S|\widehat{Y})$ is constant equal to its expectation i.e:
\begin{center}
    $E(S \vert \widehat{Y})=E(S)$
\end{center}
\end{subequations}

\end{proof}



\subsection{Algorithm}
\label{sec:algorithm}

\begin{algorithm*}[h]
\caption{Fair Representation via HGR NN}
\label{alg:fair_hgr}
\begin{algorithmic}

\STATE \textbf{Input:}
Training set ${\cal T}$, 
Loss function $\mathcal{L}$, Batchsize $b$, 
\\
{\color{white} \textbf{Input:   }} Neural Networks $h_{w_{\psi}}$,$\phi_{w_{\phi}}$, $f_{w_f}$ and $g_{w_g}$,  $\qquad$ $\qquad$ $\quad$ 
\\
{\color{white} \textbf{Input:   }}  Learning rates $\alpha_f$, $\alpha_g$, $\alpha_\phi$ and $\alpha_\psi$. Fairness control $\lambda$ 
\STATE \textbf{Repeat}
\STATE Draw $b$ samples $(x_{1}, s_1, y_{1}), . . . ,(x_{b}, s_b, y_{b})$ from ${\cal T}$
\STATE Compute the predictor objective:
\STATE $L_Y(w_{\phi},w_{\psi}) = \frac{1}{b}\sum_{i=1}^{b} \mathcal{L}(\phi_{w_{\phi}}(h_{w_{\psi}}(x_i)),y_i)$
\STATE Update the predictor model $\phi_{w_{\phi}}$ by gradient descent: \\
$w_{\phi} \leftarrow w_{\phi} - \alpha_\phi
(\frac{\partial L_Y}{\partial {w_\phi}})$
\STATE Calculate the mean and variance of the transformations:
\STATE $m_{f} \leftarrow \frac{1}{b}\sum_{i=1}^{b}f_{w_f}(h_{w_{\psi}}(x_i))$ ; $m_{g} \leftarrow \frac{1}{b}\sum_{i=1}^{b}g_{w_g}(s_{i})$  
\STATE $\sigma_{f}^{2} \leftarrow \frac{1}{b}\sum_{i=1}^{b}(f_{w_f}(h_{w_{\psi}}(x_i))-m_{f})^2$
\STATE $\sigma_{g}^{2} \leftarrow \frac{1}{b}\sum_{i=1}^{b}(g_{w_g}(s_{i})-m_{g})^2$ \\
\STATE Standardize the transformations:
\STATE $\forall i: \hat{f}_{w_f}(h_{w_{\psi}}(x_i)) \leftarrow  \frac{f_{w_f}(h_{w_{\psi}}(x_i))-m_{f}}{\sqrt{\sigma_{f}^{2}+\epsilon}} $
\STATE $\forall i: \hat{g}_{w_g}(s_i) \leftarrow  \frac{g_{w_g}(s_i)-m_{g}}{\sqrt{\sigma_{g}^{2}+\epsilon}}$
\STATE Compute the objectives:
\STATE $J(w_{f},w_{g}, w_{\psi})=\frac{1}{b}\sum_{i=1}^{b}\hat{f}_{w_f}(h_{w_{\psi}}(x_i))*\hat{g}_{w_g}({s}_{i})$
\STATE $L_E(w_{\phi},w_{\psi},w_{f},w_{g}) = \frac{1}{b}\sum_{i=1}^{b} \mathcal{L}(\phi_{w_{\phi}}(h_{w_{\psi}}(x_i)),y_i) + \lambda J(w_{f},w_{g}, w_{\psi})$
\STATE Update the adversary by gradient ascent: 
\STATE $w_{f} \leftarrow w_{f} + \alpha_{f}\frac{\partial J}{\partial w_{f}}$; \ $w_{g} \leftarrow w_{g} +\alpha_{g}\frac{\partial J}{\partial w_{g}}$ 
\STATE Update the encoder model $h_{w_{\psi}}$ by gradient descent:
\STATE $w_\psi \leftarrow w_\psi -\alpha_{\psi} (\frac{\partial L_E}{\partial {w_\psi}})$
\end{algorithmic}
\end{algorithm*}
Algorithm \ref{alg:fair_hgr} depicts our Fair HGR NN algorithm for the Demographic Parity task. 
The algorithm takes as input a training set composed of triplets $(x_i,s_i,y_i)$. At each iteration, it samples batches of size $b$ from the training data and updates the predictor parameters $w_\phi$ by one step of gradient descent with the learning rate $\alpha_\phi$. Second, it standardizes the outputs of networks $f_{w_f}$ and $g_{w_g}$ to ensure 0 mean and a variance of 1 on the batch. Then, it computes the HGR\_NN objective function, which corresponds to the empirical correlation, to estimate the HGR coefficient and the global objective. Finally, at the end of each iteration, the algorithm updates the parameters of the adversary $w_f$ and $w_g$ by one step of gradient ascent and the encoding parameters $w_\psi$ by one step of gradient descent. Back-propagation is performed on the full architecture, including mean and variance calculations, to avoid oscillations.

\subsection{Experiments}
\label{sec:experiments}

\subsubsection{Data sets}

Our experiments on real-world data are performed on five data sets. First, we experiment with three data sets where the sensitive and the outcome true value are both continuous:

\begin{itemize}
\item The US Census demographic data set \cite{USCensus} is an extraction of the 2015 American Community Survey, 
with  37 features about 74,000 census tracts. The target is the 
percentage of children below the poverty line,  
the sensitive attribute is the percentage of women in the census tract.
\item The Motor Insurance data set \cite{pricinggame15} originates from a pricing game organized by The French Institute of Actuaries in 2015,  
with 15 attributes for 36,311 observations. The target is the average claim cost 
per policy, 
the sensitive attribute is  the driver's age. 
\item The Crime data set is obtained from the UCI Machine Learning Repository  
\cite{Dua:2019}, 
with 128 attributes for 1,994 instances. 
The target is the number of violent crimes per population, 
the sensitive attribute 
is the ratio of an ethnic group per population.
\end{itemize}

We experiment with two data sets with a binary classification task where the sensitive features are continuous:
\begin{itemize}
\item Compas: The COMPAS data set \cite{angwin2016machine} contains 13 attributes of about 7,000 convicted criminals with class labels that
state whether or not the individual reoffended within 2 years of their most recent crime. Here, we use age as sensitive attribute.

\item Default: The Default data set~\cite{Yeh:2009:CDM:1464526.1465163} contains 23 features about 30,000 Taiwanese credit card users with class labels which state whether an individual will default on payments. As sensitive attribute, we use age.

\end{itemize}


\subsubsection{Experimental parameters} 

For the reproducibility of the experimental results, we reported the deep learning architecture and the different hyperparameters chosen. For all data sets, we repeat five experiments by randomly sampling two subsets, 80\% for the training set and 20\% for the test set.


Since the different data sets are not large, we train the different algorithms on a  NVIDIA Titan Xp (12 Gb) GPU and we report the average runtime of each scenario (Runtime (s)). Note that we use an Adam optimization for each scenario.

\begin{table}[h!]
\resizebox{\textwidth}{!}{
\begin{tabular}{rccccccc}
\hline
Scenario &  $\lambda$  &  Nb Epochs  & Batch Size & Architecture $h_{w_{\psi}}$ &  Architecture $\phi_{w_{\phi}}$  & Architecture $f_{w_{f}}$ \& $g_{w_{g}}$ & Runtime (s) 
\\ \hline
Biased Model & 0  & 200  & 2048  & FC:16 R, FC:8 R, FC:2  & FC:16 R, FC:8 R, FC:4 R, FC:1 Sig &  FC:64 R, FC:64 R, FC:1 &  303 \\
Biased Model & 13  & 200  & 2048  & FC:16 R, FC:8 R, FC:2  & FC:16 R, FC:8 R, FC:4 R, FC:1 Sig &  FC:64 R, FC:64 R, FC:1 &  287 \\

\hline
\centering
\label{reproducibility}
\end{tabular}
}
\caption{Synthetic Scenario. FC stands for fully connected, R for the ReLU activation function and Sig for the Sigmoid activation function.}
\end{table}

\begin{table}[h!]
\resizebox{\textwidth}{!}{
\begin{tabular}{rccccccc}
\hline
Scenario &  $\lambda$  &  Nb Epochs  & Batch Size & Architecture $h_{w_{\psi}}$ &  Architecture $\phi_{w_{\phi}}$  & Architecture $f_{w_{f}}$ \& $g_{w_{g}}$ & Runtime (s)
\\ \hline
$\sigma \leq$ 0.03 & 0.250  & 10  & 512  & see Table \ref{encoderMNIST} &  FC:10 SM &  FC:64 R, FC:64 R, FC:1 & 326 \\
$\sigma >$ 0.04     & 0.100  & 10  & 512 &  see Table \ref{encoderMNIST} &  FC:10 SM  & FC:64 R, FC:64 R, FC:1 &  371  \\

\hline
\centering
\label{reproducibility}
\end{tabular}
}
\caption{MNIST with Continuous Color Intensity. FC stands for fully connected, R for ReLU, SM for the Softmax activation function.}
\end{table}
\begin{table}[h!]
\resizebox{\textwidth}{!}{
\begin{tabular}{rccccccc}
\hline
Scenario &  $\lambda$  &  Nb Epochs  & Batch Size & Architecture $h_{w_{\psi}}$ &  Architecture $\phi_{w_{\phi}}$  & Architecture $f_{w_{f}}$ \& $g_{w_{g}}$ & Runtime (s) 
\\ \hline
US Census & 20 & 150 & 2048  & FC:128 R, FC:64 R,FC:64  & FC:128 R, FC:64 R,FC:16 R,FC:1  & FC:64 R, FC:64 R, FC:1  &  1873 \\
Motor     & 1.5  & 1000  & 2048 & FC:128 R, FC:64 R,FC:64  & FC:128 R, FC:64 T,FC:16 R,FC:1  & FC:64 R, FC:64 T, FC:1  &  235  \\
Crime     & 3  &  3000  & 512 & FC:128 R, FC:64 R,FC:64  & FC:128 R, FC:64 T,FC:16 R,FC:1  & FC:64 R, FC:64 T, FC:1  & 1584  \\
COMPAS    & 200  & 850  & 2048& FC:128 R, FC:64 R,FC:64  & FC:128 R, FC:64 R,FC:16 R,FC:1 Sig & FC:64 R, FC:64 R, FC:1  & 1721 \\
Default   & 100  & 400  & 2048  & FC:128 R, FC:64 R,FC:64  & FC:128 R, FC:64 R,FC:16 R,FC:1 Sig & FC:64 R, FC:64 R, FC:1  & 3378 \\
\hline
\centering
\label{reproducibility}
\end{tabular}
}
\caption{Real-world Experiments. FC stands for fully connected, T for Tanh, R for the ReLU activation function and Sig for the Sigmoid activation function.}
\end{table}

\begin{table}[h!]
\resizebox{\textwidth}{!}{
\begin{tabular}{rccccccc}
\toprule
\multicolumn{1}{c}{} & \multicolumn{5}{c}{Encoder MNIST $h_{w_{\psi}}$}      \\
\hline
Layer &  Number of outputs  &  Kernel size & Stride & Activation function   
\\ \hline
Input $x$ & $3*28*28$  &   &   &   &    \\
Convolution  & $64*26*26$ & $5*5$  & 1  &  ReLU   \\
MaxPooling  & $64*13*13$ & - & 2  &  -   \\
Convolution  & $64*11*11$  & $5*5$  & 1  &  ReLU   \\
MaxPooling  & $64*5*5$ & - & 2  &  -   \\
Flatten &- & -& -& - \\
Fully-connected & 512  & -  & -  &  ReLU   \\
Fully-connected & 64  & -  & -  &  None   \\

\hline
\centering
\label{encoderMNIST}
\end{tabular}
}
\caption{Encoder $h_{w_{\psi}}$ used for the MNIST Scenario with Continuous Color Intensity}
\end{table}

\end{document}


\maketitle

\begin{abstract}
This is the supplementary material of the paper Learning Unbiased Representations via Rényi Minimization. 
\end{abstract}

This supplementary material is as follows. First, Section~\ref{sec:Fair_adversarial_architectures} summarizes state of the art adversarial bias mitigation algorithms. Section~\ref{sec:proofs} provides the proofs of our theoretical claims about the HGR neural estimator. Section~\ref{sec:algorithm} gives detailed explanations of our algorithm. Finally, section~\ref{sec:experiments} provides further details about our experiments. 

\section{Fair adversarial architectures}
\label{sec:Fair_adversarial_architectures}

We illustrate in Figure \ref{fig:Arch_Adv} the various fair adversarial neural network architectures existing in the literature (including this paper). We distinguish two fair adversarial families:
\begin{itemize}
	\item Fair representation: The mitigation is carried on an intermediary latent variable $Z$. The multidimensional latent variable is fed to the adversary and to the predictor.
	\item Prediction retreatment: The mitigation is carried on the prediction itself. The prediction is fed to the adversary.
\end{itemize}
For these two families, we distinguish 3 subfamilies:
\begin{itemize}
	\item Simple Adversarial: The adversary tries to predict the sensitive attribute. The bias is mitigated by fooling this adversary.
	\item Rényi Adversarial: The adversary tries to find adequate non-linear transformations for the estimation of the HGR coefficient. The bias is mitigated via the minimization of this estimation.
	\item MINE Adversarial: The Mutual Information Neural Estimator \cite{belghazi2018mutual}, which relies on the Donsker-Varadhan representation \cite{donsker1983asymptotic} of the Kullback-Leibler divergence, is used as an adversary. The bias is mitigated via the minimization of the mutual information estimation.
\end{itemize}

\begin{figure*}[h!]
  \centerfloat
  \includegraphics[scale=0.65]{NeuRIPS2019/pictures/HGR_Algorithm.pdf}
  \caption{Fair adversarial architectures} 
  \label{fig:Arch_Adv}
\end{figure*}

We represent, in Figure \ref{alg:fair_hgr}, the 6 different fair combinations. Zhang et al. \cite{zhang2018mitigating} feed the prediction output as input to an adversary network that tries to predict the sensitive attribute (upper right). Adel et al. \cite{adel2019one} learn a fair representation by inputting it to the adversary, which is prevented from predicting the sensitive attribute (upper left). Grari et al. \cite{grari2019fairness} minimize the HGR correlation between the prediction output and the sensitive attribute (middle right). Ragonesi et al. \cite{ragonesi2020learning} rely on MINE to minimize the mutual information between a representation and the sensitive attribute (lower left). Another approach, referred to as Fair MINE NN, minimizes the mutual information between the prediction output and the sensitive attribute (lower right). Finally, our algorithm consists in learning a fair representation by minimizing its HGR correlation with the sensitive attribute (middle left). 

\section{Proofs}
\label{sec:proofs}
\subsection{Consistency of the HGR\_NN}
The domains $\mathcal{U}$ and $\mathcal{V}$ of the random variables $U$ and $V$ are assumed to be compact. \

We define the theoretical HGR as follows:
\begin{eqnarray}
HGR(U, V) = \sup_{\substack{ f:\mathcal{U}\rightarrow \mathbb{R},g:\mathcal{V}\rightarrow \mathbb{R}}}\rho(f(U),g(V))
\end{eqnarray}

where $\rho$ is the Pearson’s correlation coefficient and $f$, $g$ are (measurable) functions with finite and positive variance w.r.t the distributions of $U$ and $V$.

We define the theoretical neural HGR measure associated to a family of neural networks $F_\Theta$:
\begin{equation}
    HGR_{F_\Theta}(U,V) = \sup_{(f_{\theta_f},g_{\theta_g}) \in F_\Theta}  \rho(f_{\theta_{f}}(U),g_{\theta_{g}}(V)) 
\label{opthgr}
\end{equation}
$\Theta$ is a compact domain of $\mathbb{R}^k$ for a given $k$. \\ $F_\Theta \subset \{(f_{\theta_{f}},g_{\theta_{g}}), f_{\theta_{f}}$ and $g_{\theta_{f}}$ neural networks with parameters ($\theta_{f}$,$\theta_{g}$) $\in$ $\Theta$\}. \\ 
We use the abuse of notation $HGR_{\Theta}(U,V)$ to refer to $HGR_{F_\Theta}(U,V)$. $HGR_{\Theta}(U,V)$ is well-defined when $f_{\theta_{f}}(U)$ and  $g_{\theta_{g}}(V)$ are not constant for all ($\theta_{f}$,$\theta_{g}$) $\in$ $\Theta$. \\

We define the empirical HGR neural measure, given $n$ \textit{i.i.d} samples of $(U,V)$ and a family $F_\Theta$, as:
\begin{equation}
    \widehat{HGR(U,V)_n} = \sup_{(\theta_f,\theta_g) \in \Theta}  \rho_n(f_{\theta_{f}}(U),g_{\theta_{g}}(V)) 
\label{emphgr}
\end{equation}

where $\rho_n$ is the sample correlation computed using the samples of $(U,V)$. $\rho_n$ is well-defined \textit{iff} the sample variances are positive.

\begin{lemma}{(approximation)}
\label{approximation}
Let $\eta > 0$. There exists a family of continuous neural networks $F_{\Theta}$ parametrized by a compact domain $\Theta \subset R^{k}$, such that $HGR_{\Theta}(U,V)$ is well-defined and:
\begin{equation}
 \abs{HGR(U,V)-HGR_{\Theta}(U,V)} \leq \eta.
\end{equation}

\end{lemma}
\begin{proof}
Let $\eta>0$ and $\epsilon>0$.
\\
\\
There exist functions $f^{*},g^{*}$ centered and standardized such that: 

\begin{center}
    $HGR(U,V)-\rho(f^{*}(U),g^{*}(V)) < \epsilon$ \\
\end{center}

Let $f$ and $g$ some functions with positive and finite variance and $\Tilde{f}=\frac{(f-\mu_f)}{\sigma_f}$
; $\Tilde{g}=\frac{(g-\mu_g)}{\sigma_g}$, so that $\Tilde{f}(U)$ and $\Tilde{g}(V)$ are centered and standardized.

\begin{subequations}
	\begin{align}
  HGR(U,V)-\rho(f(U),g(V)) &\leq \epsilon + \rho(f^{\ast}(U),g^{\ast}(V)) - \rho(\Tilde{f}(U),\Tilde{g}(V)) \\
 &= \epsilon + E(f^{\ast}(U)g^{\ast}(V)) - E(\Tilde{f}(U)\Tilde{g}(V)) 
	\end{align}
\end{subequations}

Using the Cauchy-Schwarz inequality:
\begin{subequations}
	\begin{align}
  E(f^{\ast}(U)g^{\ast}(V)) - E(\Tilde{f}(U)\Tilde{g}(V)) 
  &= E \left( (f^{\ast}(U)-\Tilde{f}(U))g^{\ast}(V) \right)+E \left((g^{\ast}(V)-\Tilde{g}(V))\Tilde{f}(U)\right) \\
  &\leq \sqrt{E \left((f^{\ast}(U)-\Tilde{f}(U))^2 \right)} + \sqrt{E \Big( (g^{\ast}(V)-\Tilde{g}(V))^2 \Big) }
	\end{align}
\end{subequations}

Let $||h||_{2}=E(h(X)^2)^{1/2}$ with $X \sim U$ or $X \sim V$ depending on the context. The inequality becomes:

\begin{subequations}
	\begin{align}
  HGR(U,V)-\rho(f(U),g(V)) &\leq \epsilon + ||f^{\ast}-\Tilde{f}||_{2} +  ||g^{\ast}-\Tilde{g}||_{2}
	\end{align}
\end{subequations}

Let's find a bound of $||f^{\ast}-\Tilde{f}||_2$ that depends on $||f^{\ast}-f||_2$:
\begin{subequations}
	\begin{align}
  ||f^{\ast}-\Tilde{f}||^2_{2} &= 2-2E\left(f^{\ast}(U)\left(\frac{f(U)-\mu_{f}}{\sigma_f}\right)\right)\\
  &= 2-2\frac{E\left(f^{\ast}(U)f(U)\right)}{\sigma_f}\\
  &= 2+\frac{1}{\sigma_f}E\left((f^{\ast}(U)-f(U))^2-1-\sigma^2_f-\mu^2_f\right)\\
  &\leq 2+\frac{1}{\sigma_f}(||f^{\ast}-f||^2_2-1-\sigma_f^2)\\
  &=\frac{||f^{\ast}-f||^2_2}{\sigma_f}+2-(\frac{1}{\sigma_f}+\sigma_f) \label{bound_eq}
	\end{align}
\end{subequations}

We bound the standard deviation error: \\  
\begin{subequations}
	\begin{align}
    |1-\sigma_f| &\leq \sqrt{|1-E(f(U)^2)|} + |E(f(U))| \\
              &= \sqrt{\Big|E\Big((f^{\ast}(U)-f(U))(f^{\ast}(U)+f(U))\Big)\Big|} + \big|E\big(f(U)-f^{\ast}(U)\big)\big| \\
              &\leq \sqrt{||f^{\ast}-f||_2||f^{\ast}+f||_2} + ||f^{\ast}-f||_2  \\
              &\leq \sqrt{||f^{\ast}-f||_2(||f^{\ast}-f||_2+2||f^{\ast}||_2)} + ||f^{\ast}-f||_2 \\
              &= \sqrt{||f^{\ast}-f||^2_2+2||f^{\ast}-f||_2} + ||f^{\ast}-f||_2 \label{std error}
	\end{align}
\end{subequations}



Using \eqref{std error}, we have:

\begin{subequations}
	\begin{align}
  \frac{||f^{\ast}-f||^2_2}{\sigma_f} \leq 
  \frac{||f^{\ast}-f||^2_2}{1-|1-\sigma_f|} \leq \frac{||f^{\ast}-f||^2_2}{1-(\sqrt{||f^{\ast}-f||_2^2+2||f^{\ast}-f||_2}+||f^{\ast}-f||_2)} 
	\end{align}
\end{subequations}

Combining this with \eqref{bound_eq}:
\begin{subequations}
	\begin{align}
  ||f^{\ast}-\Tilde{f}||^2_2 \leq \frac{||f^{\ast}-f||^2_2}{1-(\sqrt{||f^{\ast}-f||_2^2+2||f^{\ast}-f||_2}+||f^{\ast}-f||_2)}+2 - (\frac{1}{\sigma_f}+\sigma_f)
	\end{align}
\end{subequations}

\begin{subequations}
	\begin{align}
  t:x\rightarrow \frac{x^2}{1-(\sqrt{x^2+2x}+x)}
	\end{align}
\end{subequations}
is continuous at $0$ so there exists $\gamma_1>0$ such that $|x| \leq \gamma_1$ $\Rightarrow$  $t(x)\leq \frac{\eta^2}{8}$

\begin{subequations}
	\begin{align}
  r:x\rightarrow 2-(\frac{1}{x}+x)
	\end{align}
\end{subequations}
is continuous at $1$ so there exists $\gamma_2>0$ such that $|x-1| \leq \gamma_2$ $\Rightarrow$  $r(x)\leq \frac{\eta^2}{8}$

\begin{subequations}
	\begin{align}
  s:x\rightarrow \sqrt{x^2+2x}+x
	\end{align}
\end{subequations}
is continuous at $0$ so there exists $\gamma_3>0$ such that $|x| \leq \gamma_3$ $\Rightarrow$  $|s(x)|\leq \min(\gamma_2,\frac{1}{2})$ \\

By the universal approximation theorem (see corollary 2.2 of \cite{hornik1989multilayer}) and knowing that $U$ is bounded, we may choose a continuous feedforward network function $f_{\theta_{f}}$ such that:
\begin{center}
    $||f^{\ast}-f_{\theta_{f}}||_2 \leq \min(\gamma_1,\gamma_3)$
\end{center}

By construction of $\gamma_1$ and $\gamma_3$, $f_{\theta_{f}}$ has positive variance and: 
$||f^{\ast}-\Tilde{f}_{\theta_{f}}||_2 \leq \sqrt{\frac{\eta^2}{8}+\frac{\eta^2}{8}}=\frac{\eta}{2}$

Similarly, we can choose a continuous feed-forward network function $g_{\theta_{g}}$ such that:
$||g^{\ast}-\Tilde{g}_{\theta_{g}}||_2 \leq \frac{\eta}{2}$ \\

Therefore:
\begin{center}
    $HGR(U,V)-\rho(f_{\theta_f}(U),g_{\theta_g}(V)) \leq \epsilon + \eta$
\end{center}
Taking the limit as $\epsilon$ approaches 0:
\begin{center}
    $HGR(U,V)-\rho(f_{\theta_f}(U),g_{\theta_g}(V)) \leq  \eta$
\end{center}

For $\Theta$ a given subset of $\mathbb{R}^k$ with $k$ the number of coordinates in $(\theta_f,\theta_g)$, we denote as $F_{\Theta}$ the family of neural networks with the same architecture as $(f_{\theta_{f}}, g_{\theta_{g}})$, parametrized by $\Theta$. \\ 

We can find a compact set $\Theta$ containing $(\theta_f,\theta_g)$ such that all the elements of $F_{\Theta}$ have positive and finite variance: while the finitude of the variance is due to the boundedness of $U$, $V$ and the continuity of the neural networks w.r.t the input, the positivity can be obtained by using the argument of the continuity of the variance w.r.t the parameters (due to the boundedness of $U$, $V$ and the continuity of the neural networks w.r.t the parameters).

Choosing such a compact set $\Theta$, we obtain the result:

\begin{equation}
 \abs{HGR(U,V)-HGR_{\Theta}(U,V)} \leq \eta.
\end{equation}

\end{proof}

\begin{lemma}{(estimation)}
\label{estimation}
Let $\eta > 0$, and $F_{\Theta}$  a family of continuous neural networks  parametrized by a compact domain $\Theta \subset R^{k}$. There exists an $N \in \mathbb{N}$ such that:
\begin{equation}
    \forall n \geq N,  
    \abs{\widehat{HGR(U,V)_n}-HGR_\Theta(U,V)} \leq \eta, a.s.
\end{equation}

\end{lemma}

\begin{proof}
To simplify notations, we will note $f$ and $g$ for $f(U)$ and $g(V)$ when there is no ambiguity. \\
Let $\eta > 0$. By triangular inequality:

\begin{equation}
    \abs{\widehat{HGR(U,V)_n}-HGR_\Theta(U,V)} \leq \sup_{(\theta_f,\theta_g) \in \Theta}  \abs[\big]{\rho_n(f_{\theta_{f}},g_{\theta_{g}}) - \rho(f_{\theta_{f}},g_{\theta_{g}})}
    \label{tri_ineq}
\end{equation}

We denote $E_n$ the empirical expectation, so that: 

\begin{equation}
    \rho_n(X,Y) = \frac{E_n(XY) - E_n(X)E_n(Y)}{ \sqrt{E_n(X^2) - E_n(X)^2} \sqrt{E_n(Y^2) - E_n(Y)^2}}
\end{equation}

The function $(\theta_{f},\theta_{g},u,v) \rightarrow (f_{\theta_f}(u),g_{\theta_g}(v))$ is continuous on a compact set, so it is bounded. The neural networks are, therefore, uniformly bounded. The compactness of $\Theta$, along with the uniform boundedness argument and the continuity of the neural networks w.r.t their parameters, allows to use the uniform law of large numbers \cite{geer2000empirical} to obtain the almost sure uniform convergence of all empirical expectations in $\rho_n$, to the corresponding expectations. \\

The almost sure uniform convergence is compatible with addition, subtraction, multiplication and division, so long as some hypotheses are verified. The compatibility with the first three operations can easily be demonstrated. As for division, we rely on the fact that we can find a uniform positive lower bound for $Var(f_{\theta_f})$ and $Var(g_{\theta_g})$. Indeed, these are positive and continuous functions w.r.t $\theta_f$ (\textit{resp.} $\theta_g$) on a compact set. We can note that this uniform positive lower-bound for the variances, combined with the almost sure uniform convergence of the sample variances, allows us to state that, eventually, all sample variances are positive. \\

We deduce, by compatibility of operations with almost sure uniform convergence, the almost sure uniform convergence of $\rho_n(f_{\theta_{f}},g_{\theta_{g}})$ to $\rho(f_{\theta_{f}},g_{\theta_{g}})$. \\

Therefore, by combining the previous result with \eqref{tri_ineq}, we can find $N \in \mathbb{N}$ such that:

\begin{equation}
    \forall n \geq N,  
    \abs{\widehat{HGR(U,V)_n}-HGR_\Theta(U,V)} \leq \eta, a.s.
\end{equation}

\end{proof}

\begin{theorem}
$\widehat{HGR(U,V)_n}$ is strongly consistent.
\end{theorem}

\begin{proof}
This is a direct consequence of Lemma \ref{approximation} combined with Lemma \ref{estimation}.
\end{proof}

\subsection{Comparison with simple adversarial algorithms}

\begin{theorem}
If $E(Y|X)$ is constant, then $\sup_{f} \rho(f(X),Y) = 0$. Else,  $f^{*} \in \argmax_{f} \rho(f(X),Y) $ iff there exists $a,b \in \mathbb{R} $, with $a > 0$, such that:
 \begin{align}
     f^{*}(X) = a E(Y|X) + b
 \end{align}
\end{theorem}

\begin{proof}
Let $f$ a function with positive and finite variance w.r.t $X$.
\begin{subequations}
    \begin{align}
        \mathrm{Cov}(f(X),Y) &= E(f(X)Y) - E(f(X))E(Y) \\
                     &= E\Big(E(f(X)Y|X)\Big) - E(f(X))E(E(Y|X)) \\
                     &= E\Big(f(X)E(Y|X)\Big) - E(f(X))E(E(Y|X)) \\
                     &= \mathrm{Cov}(f(X),E(Y|X))
    \end{align}
\end{subequations}
If $E(Y|X)$ is constant, $\mathrm{Cov}(f(X),Y) = 0$ and therefore $\rho(f(X),Y) = 0$, so that $\sup_{f} \rho(f(X),Y) = 0$. Else, by the Cauchy-Schwarz inequality: 

\begin{subequations}
    \begin{align}
        \rho(f(X),Y) &= \frac{\mathrm{Cov}(f(X),Y)}{\sigma_{f(X)}\sigma_Y} \\
                     &= \frac{\mathrm{Cov}(f(X),E(Y|X))}{\sigma_{f(X)}\sigma_Y} \\
                     &\leq \frac{\sigma_{E(Y|X)}}{\sigma_Y} \label{c-s_inequality} \\
                     &= \rho(E(Y|X),Y)
    \end{align}
\end{subequations}

The inequality above shows that any linear transformation of $E(Y|X)$ with positive slope maximizes $\rho(f(X),Y)$. Conversely, for $f^{*} \in \argmax_{f} \rho(f(X),Y)$, \ref{c-s_inequality} is an equality, which gives $\rho(f^{*}(X),E(Y|X)) = 1$. This implies that there exists $a,b \in \mathbb{R} $, with $a > 0$, such that $f^{*}(X) = a E(Y|X) + b$.
\end{proof}

Note that a one-dimensional linear regression with $f^{*}(X)$ as input and $Y$ as output allows to find $E(Y|X)$.
\begin{proposition}{}
\label{conditional_expectation}
Given $Y \sim \mathcal{N}(\mu, \sigma^{2})$, $X = \arctan(Y^{2}) + U\pi $, where  $U \perp Y$  and $U$ follows a Bernoulli distribution with  $p = \frac{1}{2}$, we have: 
\begin{center}
     $E(Y|X) = \tanh \Big(\frac{\mu}{\sigma^2}\sqrt{\tan(X)}\Big)\sqrt{\tan(X)}$
\end{center}
\end{proposition}

\begin{proof}
We have $Y^2 = \tan(X)$, so that:
\begin{align}
     Y = (2 \mathds{1}_{\{Y>0\}} - 1)\sqrt{\tan(X)} 
     \label{equality_indic}
\end{align}
so it is sufficient to compute $E(\mathds{1}_{\{Y>0\}}|X)$:
\begin{subequations}
\begin{align}
    E(\mathds{1}_{\{Y>0\}}|X) &= E\Big(E(\mathds{1}_{\{Y>0\}}|X,U)\Big|X\Big) \\
                              &= E\Big(E(\mathds{1}_{\{Y>0\}}|\tan(X),U)\Big|X\Big) \\
                              &= E\Big(E(\mathds{1}_{\{Y>0\}}|\tan(X))\Big|X\Big) \\
                              &= E(\mathds{1}_{\{Y>0\}}|\tan(X)) \\
                              &= E(\mathds{1}_{\{Y>0\}}|Y^2)
\end{align} 
\end{subequations}

Let $y > 0$ and $0 < \epsilon < y$:
\begin{subequations}
\begin{align}
    E(\mathds{1}_{\{Y>0\}}||Y^2-y|< \epsilon) 
    &= \frac{\mathbb{P}(Y >0, \sqrt{y-\epsilon} < Y < \sqrt{y+\epsilon})}{\mathbb{P}( \sqrt{y-\epsilon} < Y < \sqrt{y+\epsilon}) + \mathbb{P}( -\sqrt{y+\epsilon} < Y < -\sqrt{y-\epsilon})} \\
    &= \frac{\displaystyle \int_{\sqrt{y-\epsilon}}^{\sqrt{y+\epsilon}} P_Y(u) du}{\displaystyle \int_{\sqrt{y-\epsilon}}^{\sqrt{y+\epsilon}} P_Y(u) du + \int_{-\sqrt{y+\epsilon}}^{-\sqrt{y-\epsilon}} P_Y(u) du} \\
    &= \frac{\displaystyle \int_{\sqrt{y-\epsilon}}^{\sqrt{y+\epsilon}} P_Y(u) du}{\displaystyle \int_{\sqrt{y-\epsilon}}^{\sqrt{y+\epsilon}} (P_Y(u) + P_Y(-u)) du} \\
    &\xrightarrow[\epsilon \to 0]{} \frac{P_Y(\sqrt{y})}{P_Y(\sqrt{y}) + P_Y(-\sqrt{y})}
\end{align} 
\end{subequations}

Therefore, knowing that $P_Y(y) = \frac{e^{-\frac{1}{2}(\frac{y-\mu}{\sigma})^2}}{\sqrt{2\pi}}$ we have:
\begin{subequations}
\begin{align}
    2E(\mathds{1}_{\{Y>0\}}|X) - 1 &= \frac{P_Y(|Y|)-P_Y(-|Y|)}{P_Y(|Y|)+P_Y(-|Y|)} \\
                                   &= \tanh \Big(\frac{\mu}{\sigma^2}|Y|\Big) \\
                                   &= \tanh \Big(\frac{\mu}{\sigma^2}\sqrt{\tan(X)}\Big)
    \label{conditional_exp_indic}
\end{align}
\end{subequations}

Taking the conditional expectation in \ref{equality_indic} and plugging in \ref{conditional_exp_indic}, we obtain:
\begin{center}
     $E(Y|X) = \tanh \Big(\frac{\mu}{\sigma^2}\sqrt{\tan(X)}\Big)\sqrt{\tan(X)}$
\end{center}
\end{proof}
\begin{proposition}
\label{bounds_correlation}
With the same hypotheses as in proposition \ref{conditional_expectation}, and denoting $\alpha = \frac{\mu}{\sigma}$, we have:
\begin{center}
    $\sqrt{1-e^{-\frac{\alpha^2}{2}}}\leq \rho(E(Y|X),Y) \leq \sqrt{1-e^{-\frac{\alpha^2}{2}}(1+\alpha^2)^{-\frac{3}{2}}}$
\end{center}
\end{proposition}
\begin{proof}
We first note that, knowing that $|Y| = \sqrt{\tan(X)}$ and with a parity argument:
\begin{align}
    E(Y|X) = \tanh \Big(\frac{\mu}{\sigma^2}Y \Big)Y
\end{align}
We have:
\begin{subequations}
\begin{align}
    \rho(E(Y|X),Y)^2 &= \frac{\mathrm{Cov}(Y,E(Y|X))}{\sigma^2} \\
                     &= \frac{\mathrm{Cov}(Y,Y)-\mathrm{Cov}(Y,Y - E(Y|X))}{\sigma^2} \\
                     &= 1 - E\left(\left(\frac{Y}{\sigma}\right)^2\Big(1-\tanh\Big(\frac{\mu}{\sigma^2}Y\Big)\Big)\right)
                     \label{rho_expression}
\end{align}
\end{subequations}
With a variable change ($y = \frac{y'}{\sigma}$), we obtain: 
\begin{subequations}
\begin{align}
    E\left(\left(\frac{Y}{\sigma}\right)^2\Big(1-\tanh\Big(\frac{\mu}{\sigma^2}Y\Big)\Big)\right)
    &=  \frac{1}{\sqrt{2\pi}} \int_{\mathbb{R}} \frac{2y^2e^{-\alpha y}}{e^{\alpha y}+ e^{-\alpha y}} e^{-\frac{1}{2}(y-\alpha)^2}dy \\
    &= e^{-\frac{\alpha^2}{2}} \times \frac{1}{\sqrt{2\pi}} \int_{\mathbb{R}} \frac{y^2}{\cosh(\alpha y)} e^{-\frac{1}{2}y^2} dy
    \label{expectation_integral}
\end{align}
\end{subequations}
We have, for all $y \in \mathbb{R}$, $1 \leq \cosh(\alpha y) \leq e^{\frac{\alpha^2 y^2}{2}}$. This gives:
\begin{align}
    \frac{1}{\sqrt{2\pi}} \int_{\mathbb{R}} y^2 e^{-\frac{1}{2}y^2} dy 
    \leq \frac{1}{\sqrt{2\pi}} \int_{\mathbb{R}} \frac{y^2}{\cosh(\alpha y)} e^{-\frac{1}{2}y^2} dy 
    \leq \frac{1}{\sqrt{2\pi}} \int_{\mathbb{R}} y^2 e^{-\frac{1}{2}(1+\alpha^2)y^2} dy
\end{align}
i.e 
\begin{align}
    1 \leq \frac{1}{\sqrt{2\pi}} \int_{\mathbb{R}} \frac{y^2}{\cosh(\alpha y)} e^{-\frac{1}{2}y^2} dy \leq (1+\alpha^2)^{-\frac{3}{2}}
    \label{integral_ineq}
\end{align}

We combine \ref{rho_expression}, \ref{expectation_integral} and \ref{integral_ineq} to obtain the result:
\begin{center}
    $\sqrt{1-e^{-\frac{\alpha^2}{2}}}\leq \rho(E(Y|X),Y) \leq \sqrt{1-e^{-\frac{\alpha^2}{2}}(1+\alpha^2)^{-\frac{3}{2}}}$
\end{center}
\end{proof}
\vspace{-0.5cm}
\begin{figure*}[h!]
  \centerfloat
  \includegraphics[scale=0.6]{NeuRIPS2019/pictures/correlation_bounds.pdf}
  \caption{Simplified HGR correlation w.r.t $\alpha$} 
  \label{fig:correlation_bounds}
\end{figure*}

In Figure \ref{fig:correlation_bounds}, we illustrate the bounds found in proposition \ref{bounds_correlation}, $\rho(E(Y|X),Y)$ being estimated by Monte-Carlo. First, we note that the upper bound is close to $\rho(E(Y|X),Y)$, whereas the lower bound $\sqrt{1-e^{-\frac{\alpha^2}{2}}}$ is not as precise. For non-zero values of $\alpha$, $\rho(E(Y|X),Y)$ is positive, so that a predictive neural network can capture some non-linear dependencies between $Y$ and $X$. This is due to the fact that, for $\alpha \neq 0$, the square function is bijective when restricted to some open interval containing the mean of $Y$, whereas when $\alpha = 0$, such an interval cannot be found. When this interval is large and the standard deviation of $Y$ is not too large (which corresponds to high values of $|\alpha|$), $\rho(E(Y|X),Y)$ approaches 1 and the $Y$ prediction error approaches 0. In the opposite case, $\rho(E(Y|X),Y)$ is close to 0 and a predictive neural network cannot capture dependencies. 
\\
\begin{proposition}
We consider the global fairness objective of the prediction retreatment simple adversarial algorithm, with $X$ the input data, $Y$ the output data and $S$ the sensitive attribute (with $\widehat{Y} = f(X)$):
\begin{align}
    \max_{f} \min_{g} E\Big(\big(S-g(f(X))\big)^2\Big)
\end{align}
whose 
optimum is achieved when $E(S \vert \widehat{Y})=E(S)$, different from the  demographic parity fairness objective $P(S \vert \widehat{Y})=P(S)$ for continuous features.
\end{proposition}

\begin{proof}
We have:
\begin{center}
    $\max_{f} \min_{g} E\Big(\big(S-g(f(X))\big)^2\Big) = \max_{f} E\Big(\big(S-E(S|f(X))\big)^2\Big)$
\end{center}

Some algebraic manipulations with expectations give:
\begin{subequations}
\begin{align}
   E\Big(\big(S-E(S|\widehat{Y})\big)^2\Big) &= E(S^2) - 2E\Big(SE(S|\widehat{Y})\Big) + E(E(S|\widehat{Y})^2) \\
                      &= E(S^2) - 2E\Big(E\big(SE(S|\widehat{Y})\big|\widehat{Y}\big)\Big) + E(E(S|\widehat{Y})^2) \\
                      &= E(S^2) - E(E(S|\widehat{Y})^2) \\
                      &= (E(S^2) - E(S)^2) - \Big(E(E(S|\widehat{Y})^2) - E(E(S|\widehat{Y}))^2\Big) \\
                      &= \sigma_{S}^2 - \sigma_{E(S|\widehat{Y})}^2
\end{align}

Therefore, the global fairness objective is equivalent to 
\begin{center}
    $\min_{f} \sigma_{E(S|f(X))}^2$
\end{center}
In the optimal case, we have $\sigma_{E(S|\widehat{Y})} = 0$, which corresponds to the case when $E(S|\widehat{Y})$ is constant equal to its expectation i.e:
\begin{center}
    $E(S \vert \widehat{Y})=E(S)$
\end{center}
\end{subequations}

\end{proof}



\section{Algorithm}
\label{sec:algorithm}

\begin{algorithm*}[h]
\caption{Fair Representation via HGR NN}
\label{alg:fair_hgr}
\begin{algorithmic} 

\STATE \textbf{Input:}
Training set ${\cal T}$, 
Loss function $\mathcal{L}$, Batchsize $b$, 
\\
{\color{white} \textbf{Input:   }} Neural Networks $h_{w_{\psi}}$,$\phi_{w_{\phi}}$, $f_{w_f}$ and $g_{w_g}$,  $\qquad$ $\qquad$ $\quad$ \\
{\color{white} \textbf{Input:   }}  Learning rates $\alpha_f$, $\alpha_g$, $\alpha_\phi$ and $\alpha_\psi$. Fairness control $\lambda$ 
\STATE \textbf{Repeat}
\STATE Draw $b$ samples $(x_{1}, s_1, y_{1}), . . . ,(x_{b}, s_b, y_{b})$ from ${\cal T}$
\STATE Compute the predictor objective:
\STATE $L_Y(w_{\phi},w_{\psi}) = \frac{1}{b}\sum_{i=1}^{b} \mathcal{L}(\phi_{w_{\phi}}(h_{w_{\psi}}(x_i)),y_i)$
\STATE Update the predictor model $\phi_{w_{\phi}}$ by gradient descent: \\
$w_{\phi} \leftarrow w_{\phi} - \alpha_\phi
(\frac{\partial L_Y}{\partial {w_\phi}})$
\STATE Calculate the mean and variance of the transformations:
\STATE $m_{f} \leftarrow \frac{1}{b}\sum_{i=1}^{b}f_{w_f}(h_{w_{\psi}}(x_i))$ ; $m_{g} \leftarrow \frac{1}{b}\sum_{i=1}^{b}g_{w_g}(s_{i})$  
\STATE $\sigma_{f}^{2} \leftarrow \frac{1}{b}\sum_{i=1}^{b}(f_{w_f}(h_{w_{\psi}}(x_i))-m_{f})^2$
\STATE $\sigma_{g}^{2} \leftarrow \frac{1}{b}\sum_{i=1}^{b}(g_{w_g}(s_{i})-m_{g})^2$ \\
\STATE Standardize the transformations:
\STATE $\forall i: \hat{f}_{w_f}(h_{w_{\psi}}(x_i)) \leftarrow  \frac{f_{w_f}(h_{w_{\psi}}(x_i))-m_{f}}{\sqrt{\sigma_{f}^{2}+\epsilon}} $
\STATE $\forall i: \hat{g}_{w_g}(s_i) \leftarrow  \frac{g_{w_g}(s_i)-m_{g}}{\sqrt{\sigma_{g}^{2}+\epsilon}}$
\STATE Compute the objectives:
\STATE $J(w_{f},w_{g}, w_{\psi})=\frac{1}{b}\sum_{i=1}^{b}\hat{f}_{w_f}(h_{w_{\psi}}(x_i))*\hat{g}_{w_g}({s}_{i})$
\STATE $L_E(w_{\phi},w_{\psi},w_{f},w_{g}) = \frac{1}{b}\sum_{i=1}^{b} \mathcal{L}(\phi_{w_{\phi}}(h_{w_{\psi}}(x_i)),y_i) + \lambda J(w_{f},w_{g}, w_{\psi})$
\STATE Update the adversary by gradient ascent: 
\STATE $w_{f} \leftarrow w_{f} + \alpha_{f}\frac{\partial J}{\partial w_{f}}$; \ $w_{g} \leftarrow w_{g} +\alpha_{g}\frac{\partial J}{\partial w_{g}}$ 
\STATE Update the encoder model $h_{w_{\psi}}$ by gradient descent:
\STATE $w_\psi \leftarrow w_\psi -\alpha_{\psi} (\frac{\partial L_E}{\partial {w_\psi}})$
\end{algorithmic}
\end{algorithm*}
Algorithm \ref{alg:fair_hgr} depicts our Fair HGR NN algorithm for the Demographic Parity task. 
The algorithm takes as input a training set composed of triplets $(x_i,s_i,y_i)$. At each iteration, it samples batches of size $b$ from the training data and updates the predictor parameters $w_\phi$ by one step of gradient descent with the learning rate $\alpha_\phi$. Second, it standardizes the outputs of networks $f_{w_f}$ and $g_{w_g}$ to ensure 0 mean and a variance of 1 on the batch. Then, it computes the HGR\_NN objective function, which corresponds to the empirical correlation, to estimate the HGR coefficient and the global objective. Finally, at the end of each iteration, the algorithm updates the parameters of the adversary $w_f$ and $w_g$ by one step of gradient ascent and the encoding parameters $w_\psi$ by one step of gradient descent. Back-propagation is performed on the full architecture, including mean and variance calculations, to avoid oscillations.

\section{Experiments}
\label{sec:experiments}

\subsection{Data sets}

Our experiments on real-world data are performed on five data sets. First, we experiment with three data sets where the sensitive and the outcome true value are both continuous:

\begin{itemize}
\item The US Census demographic data set \cite{USCensus} is an extraction of the 2015 American Community Survey, 
with  37 features about 74,000 census tracts. The target is the 
percentage of children below the poverty line,  
the sensitive attribute is the percentage of women in the census tract.
\item The Motor Insurance data set \cite{pricinggame15} originates from a pricing game organized by The French Institute of Actuaries in 2015,  
with 15 attributes for 36,311 observations. The target is the average claim cost 
per policy, 
the sensitive attribute is  the driver's age. 
\item The Crime data set is obtained from the UCI Machine Learning Repository  
\cite{Dua:2019}, 
with 128 attributes for 1,994 instances. 
The target is the number of violent crimes per population, 
the sensitive attribute 
is the ratio of an ethnic group per population.
\end{itemize}

We experiment with two data sets with a binary classification task where the sensitive features are continuous:
\begin{itemize}
\item Compas: The COMPAS data set \cite{angwin2016machine} contains 13 attributes of about 7,000 convicted criminals with class labels that
state whether or not the individual reoffended within 2 years of their most recent crime. Here, we use age as sensitive attribute.

\item Default: The Default data set~\cite{Yeh:2009:CDM:1464526.1465163} contains 23 features about 30,000 Taiwanese credit card users with class labels which state whether an individual will default on payments. As sensitive attribute, we use age.

\end{itemize}


\subsection{Experimental parameters} 

For the reproducibility of the experimental results, we reported the deep learning architecture and the different hyperparameters chosen. For all data sets, we repeat five experiments by randomly sampling two subsets, 80\% for the training set and 20\% for the test set.


Since the different data sets are not large, we train the different algorithms on a  NVIDIA Titan Xp (12 Gb) GPU and we report the average runtime of each scenario (Runtime (s)). Note that we use an Adam optimization for each scenario.

\begin{table}[h!]
\resizebox{\textwidth}{!}{
\begin{tabular}{rccccccc}
\hline
Scenario &  $\lambda$  &  Nb Epochs  & Batch Size & Architecture $h_{w_{\psi}}$ &  Architecture $\phi_{w_{\phi}}$  & Architecture $f_{w_{f}}$ \& $g_{w_{g}}$ & Runtime (s) 
\\ \hline
Biased Model & 0  & 200  & 2048  & FC:16 R, FC:8 R, FC:2  & FC:16 R, FC:8 R, FC:4 R, FC:1 Sig &  FC:64 R, FC:64 R, FC:1 &  303 \\
Biased Model & 13  & 200  & 2048  & FC:16 R, FC:8 R, FC:2  & FC:16 R, FC:8 R, FC:4 R, FC:1 Sig &  FC:64 R, FC:64 R, FC:1 &  287 \\

\hline
\centering
\label{reproducibility}
\end{tabular}
}
\caption{Synthetic Scenario. FC stands for fully connected, R for the ReLU activation function and Sig for the Sigmoid activation function.}
\end{table}

\begin{table}[h!]
\resizebox{\textwidth}{!}{
\begin{tabular}{rccccccc}
\hline
Scenario &  $\lambda$  &  Nb Epochs  & Batch Size & Architecture $h_{w_{\psi}}$ &  Architecture $\phi_{w_{\phi}}$  & Architecture $f_{w_{f}}$ \& $g_{w_{g}}$ & Runtime (s)
\\ \hline
$\sigma \leq$ 0.03 & 0.250  & 10  & 512  & see Table \ref{encoderMNIST} &  FC:10 SM &  FC:64 R, FC:64 R, FC:1 & 326 \\
$\sigma >$ 0.04     & 0.100  & 10  & 512 &  see Table \ref{encoderMNIST} &  FC:10 SM  & FC:64 R, FC:64 R, FC:1 &  371  \\

\hline
\centering
\label{reproducibility}
\end{tabular}
}
\caption{MNIST with Continuous Color Intensity. FC stands for fully connected, R for ReLU, SM for the Softmax activation function.}
\end{table}
\begin{table}[h!]
\resizebox{\textwidth}{!}{
\begin{tabular}{rccccccc}
\hline
Scenario &  $\lambda$  &  Nb Epochs  & Batch Size & Architecture $h_{w_{\psi}}$ &  Architecture $\phi_{w_{\phi}}$  & Architecture $f_{w_{f}}$ \& $g_{w_{g}}$ & Runtime (s) 
\\ \hline
US Census & 20 & 150 & 2048  & FC:128 R, FC:64 R,FC:64  & FC:128 R, FC:64 R,FC:16 R,FC:1  & FC:64 R, FC:64 R, FC:1  &  1873 \\
Motor     & 1.5  & 1000  & 2048 & FC:128 R, FC:64 R,FC:64  & FC:128 R, FC:64 T,FC:16 R,FC:1  & FC:64 R, FC:64 T, FC:1  &  235  \\
Crime     & 3  &  3000  & 512 & FC:128 R, FC:64 R,FC:64  & FC:128 R, FC:64 T,FC:16 R,FC:1  & FC:64 R, FC:64 T, FC:1  & 1584  \\
COMPAS    & 200  & 850  & 2048& FC:128 R, FC:64 R,FC:64  & FC:128 R, FC:64 R,FC:16 R,FC:1 Sig & FC:64 R, FC:64 R, FC:1  & 1721 \\
Default   & 100  & 400  & 2048  & FC:128 R, FC:64 R,FC:64  & FC:128 R, FC:64 R,FC:16 R,FC:1 Sig & FC:64 R, FC:64 R, FC:1  & 3378 \\
\hline
\centering
\label{reproducibility}
\end{tabular}
}
\caption{Real-world Experiments. FC stands for fully connected, T for Tanh, R for the ReLU activation function and Sig for the Sigmoid activation function.}
\end{table}

\begin{table}[h!]
\resizebox{\textwidth}{!}{
\begin{tabular}{rccccccc}
\toprule
\multicolumn{1}{c}{} & \multicolumn{5}{c}{Encoder MNIST $h_{w_{\psi}}$}      \\
\hline
Layer &  Number of outputs  &  Kernel size & Stride & Activation function   
\\ \hline
Input $x$ & $3*28*28$  &   &   &   &    \\
Convolution  & $64*26*26$ & $5*5$  & 1  &  ReLU   \\
MaxPooling  & $64*13*13$ & - & 2  &  -   \\
Convolution  & $64*11*11$  & $5*5$  & 1  &  ReLU   \\
MaxPooling  & $64*5*5$ & - & 2  &  -   \\
Flatten &- & -& -& - \\
Fully-connected & 512  & -  & -  &  ReLU   \\
Fully-connected & 64  & -  & -  &  None   \\

\hline
\centering
\label{encoderMNIST}
\end{tabular}
}
\caption{Encoder $h_{w_{\psi}}$ used for the MNIST Scenario with Continuous Color Intensity}
\end{table}

\newpage
\bibliography{MyCollection}
\bibliographystyle{abbrv}

\newpage